\DeclareMathOperator*{\argmin}{arg\,min}
\newcommand{\textBlue}[1]{{\leavevmode\color{blue}#1}} 
\newcommand{\MSE}{\mathrm{MSE}}
\newcommand{\Var}{\mathrm{Var}}
\newcommand{\cH}{\mathcal{H}}
\newcommand{\cF}{\mathcal{F}}
\newcommand{\cX}{\mathcal{X}}
\newcommand{\cD}{\mathcal{D}}
\newcommand{\Indicator}{\mathds{1}}
\newcommand{\ev}{{\mathbb{E}}}
\newcommand{\pr}{\mathbb{P}}
\newcommand{\Ed}[2]{\ev_{#1}\left[#2\right]}
\newtheorem{definition}{Definition}
\title{Selective Regression Under Fairness Criteria}
\author{%
    Abhin Shah  \footnote{Equal contribution. Email address: \texttt{abhin@mit.edu, buyuheng@mit.edu}}\hspace{2mm}\textsuperscript{,}\footnote{Department of Electrical Engineering and Computer Science, Massachusetts Institute of Technology.}\\
	\and
	Yuheng Bu \footnotemark[1]\hspace{2mm}\textsuperscript{,}\footnotemark[2]\\
	\and
	Joshua Ka-Wing Lee\footnotemark[2]\\
	\and
	Subhro Das\footnote{MIT-IBM Watson AI Lab, IBM Research.}\\
	\and
	Rameswar Panda \footnotemark[3]\\
	\and
	Prasanna Sattigeri\footnotemark[3]\\
	\and
	Gregory W. Wornell\footnotemark[2]\\
}
\date{}
\begin{document}
\maketitle
\begin{abstract}
Selective regression allows abstention from prediction if the confidence to make an accurate prediction is not sufficient. In general, by allowing a reject option, one expects the performance of a regression model to increase at the cost of reducing coverage (i.e., by predicting on fewer samples). However, as we show, in some cases, the performance of a minority subgroup can decrease while we reduce the coverage, and thus selective regression can magnify disparities between different sensitive subgroups. Motivated by these disparities, we propose new fairness criteria for selective regression requiring the performance of every subgroup to improve with a decrease in coverage. We prove that if a feature representation satisfies the \textit{sufficiency} criterion or is \textit{calibrated for mean and variance}, then the proposed fairness criteria is met. Further, we introduce two approaches to mitigate the performance disparity across subgroups: (a) by regularizing an upper bound of conditional mutual information under a Gaussian assumption and (b) by regularizing a contrastive loss for conditional mean and conditional variance prediction. The effectiveness of these approaches is demonstrated on synthetic and real-world datasets.
\end{abstract}

 \section{Introduction}
\label{sec:intro}
As the adoption of machine learning (ML) based systems accelerates in a wide range of applications, including critical workflows such as healthcare management \citep{bellamy2018ai}, employment screening \citep{selbst2019fairness}, automated loan processing,
there is a renewed focus on the trustworthiness of such systems. An important attribute of a trustworthy ML system is to reliably estimate the uncertainty in its predictions.
For example, consider a loan approval ML system where the prediction task is to suggest appropriate loan terms (e.g., loan approval, interest rate). 
If the model's uncertainty in its prediction is high for an applicant, the prediction can be rejected to avoid potentially costly errors. The users of the system, i.e., the decision-maker, can intervene, and take remedial actions such as gathering more information for applicants with rejected model predictions or involving a special human credit committee before arriving at a decision. This paradigm is known as \textit{prediction with reject-option} or \textit{selective prediction}. 

By making the tolerance for uncertainty more stringent, the user expects the error rate of the predictions made by the system to decrease as the system makes predictions for fewer samples, i.e., as coverage is reduced.
Although the error may lessen over the entire population, \cite{jones2020selective} demonstrated that for \textit{selective classification}, this may not be true for different sub-populations.
In other words, selective classification could magnify disparities across different sensitive groups (e.g., race, gender). 
For example, in the loan approval ML system, the error rate for a sensitive group could increase with a decrease in coverage.
To mitigate such disparities, \cite{lee2021fair, schreuder2021classification} proposed methods for performing fair selective classification. 


In this work, we demonstrate and investigate the performance disparities across different subgroups for \textit{selective regression} as well as develop novel methods to mitigate such disparities. Similar to \cite{lee2021fair}, we do not assume access to the identity of sensitive groups at test time.
Compared to selective classification, one major challenge to tackling the aforementioned disparities in selective regression is as follows:
in selective classification, generating an \textit{uncertainty measure} (i.e., the model's uncertainty for its prediction) from an existing classifier is straightforward. For example, one could take the softmax output of an existing classifier as an uncertainty measure. In contrast, there is no direct method to extract an uncertainty measure from an existing regressor designed only to predict the conditional mean. \\

\noindent{\bf Contributions.} First, we show via the Insurance dataset and a toy example that selective regression, like selective classification, can decrease the performance of some subgroups when \emph{coverage} (the fraction of samples for which a decision is made) is reduced (see Section \ref{subsec:biases_sr}).
\begin{enumerate}[leftmargin=*,topsep=-0.6em,itemsep=-2pt]
\item Motivated by this, we provide a novel fairness criteria (Definition \ref{def:fair}) for selective regression, namely, \textit{monotonic selective risk}, which requires the risk of each subgroup to monotonically decrease with a decrease in coverage (see Section \ref{subsec:fair_sr}).
\item We prove that if a feature representation satisfies the standard \textit{sufficiency} criterion or is \textit{calibrated for mean and variance} (Definition \ref{def:cali}), then the \textit{monotonic selective risk} criteria is met (see Theorem \ref{thm:sufficiency} and \ref{thm:cali}).
\item We provide two neural network-based algorithms: one to impose the sufficiency criterion by regularizing an upper bound of conditional mutual information under a Gaussian assumption (see Section \ref{subsec:hetero}) and the other
to impose the calibration for mean and variance by regularizing a contrastive loss (see Section \ref{subsec:residue}). 
\item Finally, we empirically\footnote[1]{The source code is available at \url{github.com/Abhin02/fair-selective-regression}.} demonstrate the effectiveness of these algorithms on real-world datasets (see Section \ref{sec:expts}).
\end{enumerate}
\section{Background}
\label{sec:preliminaries}
\subsection{Fair Regression}
\label{subsec:fair_reg}
In standard (supervised) regression, given pairs of input variables $X\in\cX$ (e.g., demographic information) and target variable $Y\in \mathbb{R}$ (e.g., annual medical expenses), we want to find a predictor $f : \cX \rightarrow \mathbb{R}$ that best estimates the target variable for new input variables. 
Formally, given a set of predictors $\cF$ and a set of training samples of $X$ and $Y$, i.e., $\{(x_1,y_1),\dots,(x_n,y_n)\}$, the goal is to construct $f^* \in \cF$ which minimizes the mean-squared error (MSE):
\begin{equation}
    f^* = \argmin_{f \in \cF} \ev[(Y-f(X))^2]. \label{eq:supervised_goal}
\end{equation}

In fair regression, we augment the goal in \eqref{eq:supervised_goal} by requiring our predictor to retain ``fairness'' w.r.t.\ some sensitive attributes $D\in\cD$ (e.g., race, gender). For example, we may want our predictions of annual medical expenses using the demographic information not to  discriminate w.r.t.\ race. 

In this work, we assume $D$ to be discrete and consider members with the same value of $D$ as being in the same \textit{subgroup}. While numerous criteria have been proposed to enforce fairness in machine learning, we focus on the notion of \textit{subgroup risks} \citep{williamson2019fairness}, which ensures that the predictor $f$ behaves similarly (in terms of risks) across all subgroups. 
This notion, also known as accuracy disparity, has been used frequently in fair regression, e.g., \cite{chzhen2020example, chi2021understanding}, and has also received attention in the field of domain generalization, e.g., \cite{krueger2021out}.
Formally, given a set of training samples of $X$, $Y$, and $D$, i.e., $\{(x_1,y_1,d_1),\dots,(x_n,y_n,d_n)\}$, the goal is to construct $f^* \in \cF$ which minimizes the overall MSE subject to the subgroup MSE being equal for all subgroups:
\begin{align}
    & f^* = \argmin_{f \in \cF} \ev[(Y-f(X))^2] \text{~~s.t~~} \forall d,d' \in \cD,\\
    & \ev[(Y\hspace{-0.5mm}-\hspace{-0.5mm}f(X))^2|D = d] = \ev[(Y\hspace{-0.5mm}-\hspace{-0.5mm}f(X))^2|D = d']. \label{eq:fair_goal}
\end{align}
In this work, we consider the scenario where the sensitive attribute is available only during training i.e., we do not assume access to the sensitive attribute at test time.
\subsection{Selective Regression}
\label{subsec:sel_reg}
In selective regression, given pairs of input variables $X\in\cX$ and target variable $Y\in \mathbb{R}$, for new input variables, the system has a choice to: (a) make a prediction of the target variable or (b) abstain from a prediction (if it is not sufficiently confident). In the example of predicting annual medical expenses, we may prefer abstention in certain scenarios to avoid harms arising from wrong predictions. By only making predictions for those input variables with low prediction uncertainty, the performance (in terms of MSE) is expected to improve. Formally, in addition to a predictor $f : \cX \rightarrow \mathbb{R}$ that best estimates the target variable for new input variables, we need to construct a rejection rule $\Gamma : \cX \rightarrow \{0,1\}$ that decides whether or not to make a prediction for new input variables. Thereafter, for $X = x$, the system outputs $f(x)$ as the prediction when $\Gamma(x)=1$, and makes no prediction if $\Gamma(x)=0$. 

There are two quantities that characterize the performance of selective regression: (i) \textit{coverage}, i.e., the fraction of samples that the system makes predictions on, which is denoted by $c(\Gamma) = \pr(|\Gamma(X)| = 1)$ and (ii) the MSE when prediction is performed
$$\mathbb{E}[(Y-f(X))^2|\Gamma(X)=1].$$
In order to construct a rejection rule $\Gamma$, we need some measure of the uncertainty $g(\cdot)$ associated with a prediction $f(\cdot)$. Then, the rejection rule $\Gamma$ can be defined as:
\begin{equation}\label{eq:rejection_rule}
    \Gamma(x) \coloneqq \begin{cases}
		1, & \text{if $g(x) \le \tau$}\\
        0, & \text{otherwise.}
	 \end{cases}
\end{equation}
where $\tau$ is the parameter that balances the MSE vs. coverage tradeoff: larger $\tau$ results in a larger coverage but also yields a larger MSE. Therefore, $\tau$ can be interpreted as the cost for not making predictions. As discussed in \cite{zaoui2020regression}, a natural choice for the uncertainty measure $g(\cdot)$ could be the conditional variance of $Y$ given $X$.

The goal of selective regression is to build a model with (a) high coverage and (b) low MSE. However, there may not exist any $\tau$ for which both (a) and (b) are satisfied simultaneously. Therefore, in practice, the entire MSE vs. coverage tradeoff curve is generated by sweeping over all possible values of $\tau$ allowing the system designer to choose any convenient operating point.
\subsection{Related Work}
{\bf Selective Regression.} While selective classification has received a lot of attention in the machine learning community \cite{chow1957optimum, chow1970optimum, hellman1970nearest, herbei2006classification, bartlett2008classification, nadeem2009accuracy, lei2014classification, geifman2017selective}, there is very limited work on selective regression. It is also known that existing methods for selective classification cannot be used directly for selective regression \cite{jiang2020risk}.
\cite{wiener2012pointwise} studied regression with reject option to make predictions inside a ball of small radius around the regression function with high probability. \cite{geifman2019selectivenet} proposed SelectiveNet, a neural network with an integrated reject option, to optimize both classification (or regresssion) performance and rejection rate simultaneously.
\cite{zaoui2020regression} considered selective regression with a fixed rejection rate and derived the optimal rule which relies on thresholding the conditional variance.
%
\cite{jiang2020risk} analyzed selective regression with the goal to minimize the rejection rate given a regression risk bound. We emphasize that none of these works study the question of fairness in selective regression.\\

\noindent{\bf Fair Regression.} \cite{calders2013controlling}, one of the first works on fair regression, studied linear regression with constraints on the mean outcome or residuals of the models. More recently, several works including \cite{berk2017convex, perez2017fair, komiyama2017two, komiyama2018nonconvex, fitzsimons2018equality, raff2018fair, agarwal2019fair, nabi2019learning,  oneto2020general} considered fair regression employing various fairness criteria. \cite{mary2019fairness} and \cite{lee2020maximal}  enforced independence between prediction and sensitive attribute by ensuring that the maximal correlation is below a fixed threshold. \cite{chzhen2020fair} considered learning an optimal regressor requiring the distribution of the output to remain the same conditioned on the sensitive attribute.
We emphasize that none of these works could be used for selective regression as they are designed to only predict the conditional mean and not the conditional variance (i.e., the uncertainty).\\

\noindent{\bf Fairness Criteria.} Numerous metrics and criteria have been proposed to enforce fairness in machine learning \cite{verma2018fairness}. Many of these criteria are mutually exclusive outside of trivial cases \citep{golz2019paradoxes}. Further, the existing approaches also differ in the way they enforce these criteria: (a) pre-processing methods \citep{zemel2013learning, louizos2015variational, calmon2017optimized} modify the training set to ensure fairness of any learned model, (b) post-processing methods \cite{hardt2016equality, pleiss2017fairness, corbett2017algorithmic} transform the predictions of a trained model to satisfy a measure of fairness, and (c) in-processing methods \cite{kamishima2011fairness, zafar2017fairness, agarwal2018reductions} modify the training process to directly learn fair predictors e.g., minimizing a loss function that accounts for both accuracy and fairness as in \eqref{eq:fair_goal}. Additionally, these criteria also differ in the kind of fairness they consider (see \cite{mehrabi2021survey, castelnovo2022clarification} for details): (a) group fairness 
ensures that subgroups that differ by 
sensitive attributes are treated similarly and
(b) individual fairness 
ensures that individuals who are similar (with respect to some metric) are treated similarly. In this work, we consider group fairness and propose a novel fairness criteria specific to selective regression. Our approach falls under the umbrella of in-processing methods as will be evident in Section \ref{sec:algo_design} (see \eqref{equ:feature_update}, \eqref{eq:in1}, and \eqref{eq:in2}).

\section{Fair Selective Regression}
\label{sec:prob_formulation}
While fair regression and selective regression have been independently studied before, consideration of fair selective regression (i.e., selective regression while ensuring fairness) is missing in the literature.
In this section, we explore the disparities between different subgroups that may arise in selective regression. Building on top of this, we formulate a notion of fairness for selective regression.

\subsection{Disparities in Selective Regression}
\label{subsec:biases_sr}
\cite{jones2020selective} argued that, in selective classification, increasing abstentions (i.e., decreasing coverage) could \textit{decrease} accuracies on some subgroups and observed this behavior for CelebA dataset. In this section, we show that a similar phenomenon can be observed in selective regression.\\

\begin{figure*}[!t]
\begin{subfigure}{.3\textwidth}
  \centering
  \captionsetup{justification=centering}
  \includegraphics[width=0.95\textwidth]{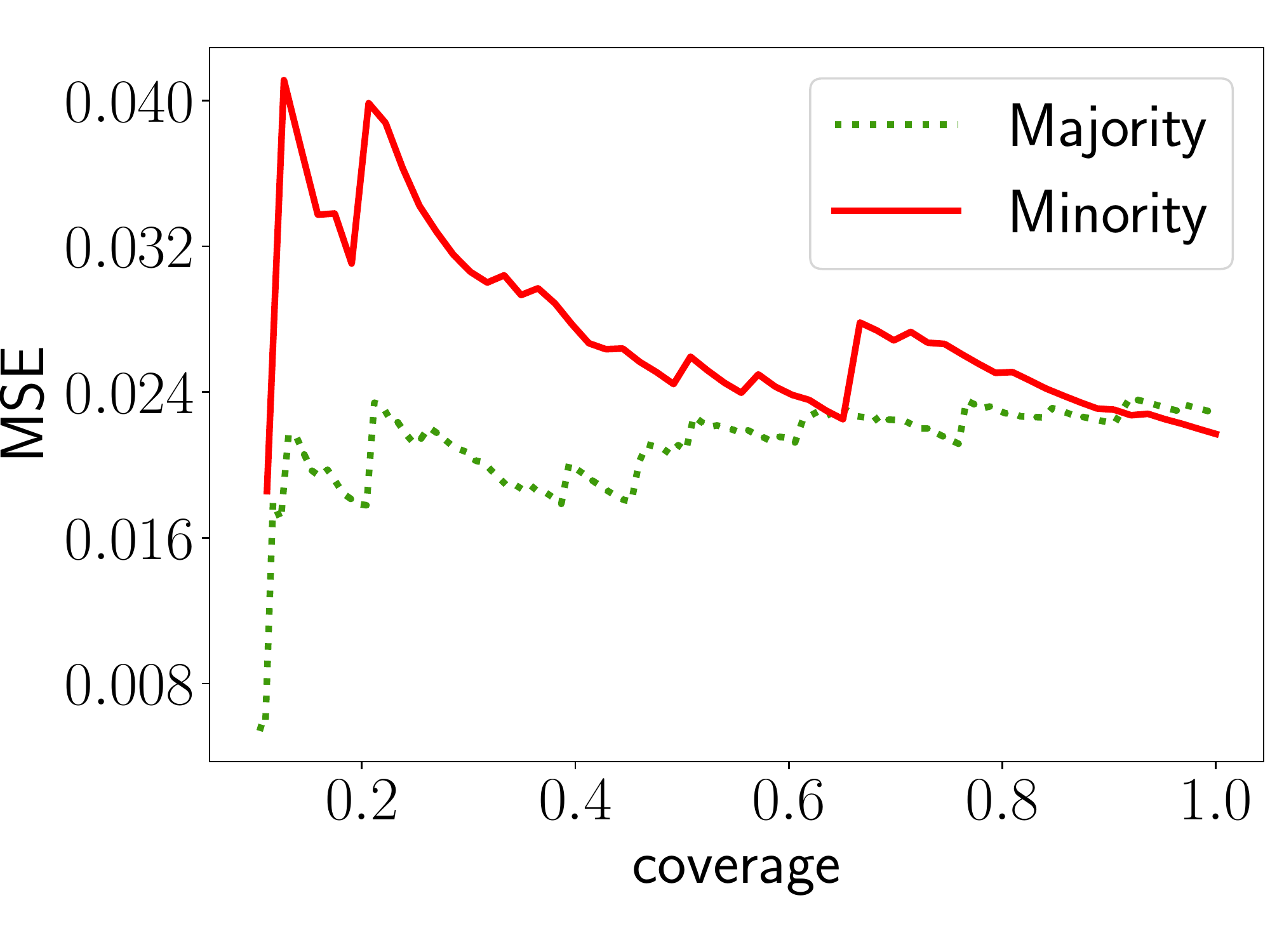}
\caption{The disparity for the Insurance dataset
when the predictor is the conditional
expectation \& the uncertainty measure
is the conditional variance.}
  \label{fig:disparities_insurance}
\end{subfigure}
\begin{subfigure}{.3\textwidth}
  \centering
  \captionsetup{justification=centering}
  \includegraphics[width=0.95\textwidth]{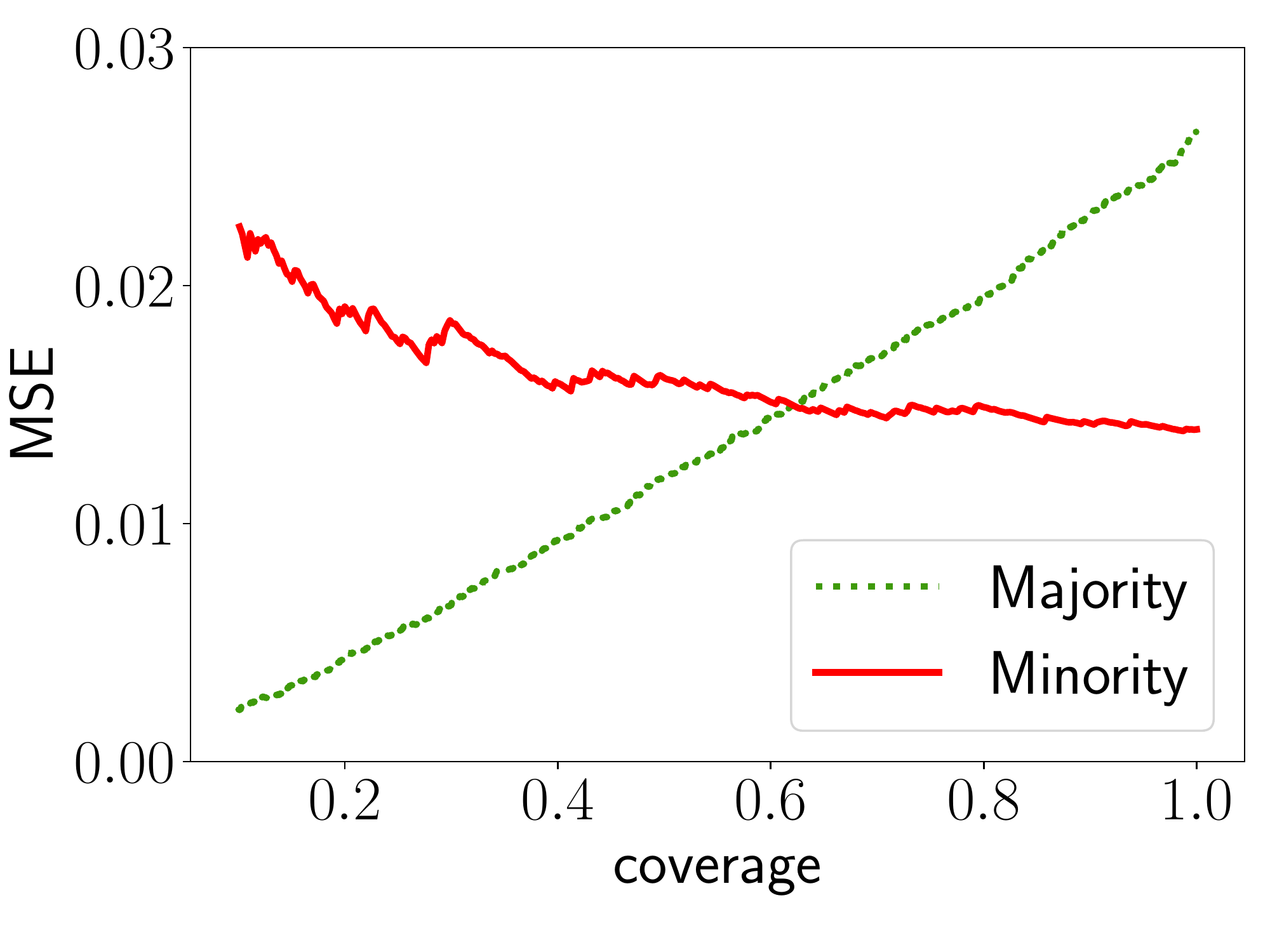}
\caption{The disparity for toy example when the
predictor is the conditional expectation \&
the uncertainty measure is the conditional
variance of $Y$ given $X_1$ and $X_2$.}
  \label{fig:synthetic_baseline}
\end{subfigure}
\begin{subfigure}{.32\textwidth}
  \centering
  \captionsetup{justification=centering}
  \includegraphics[width=0.95\textwidth]{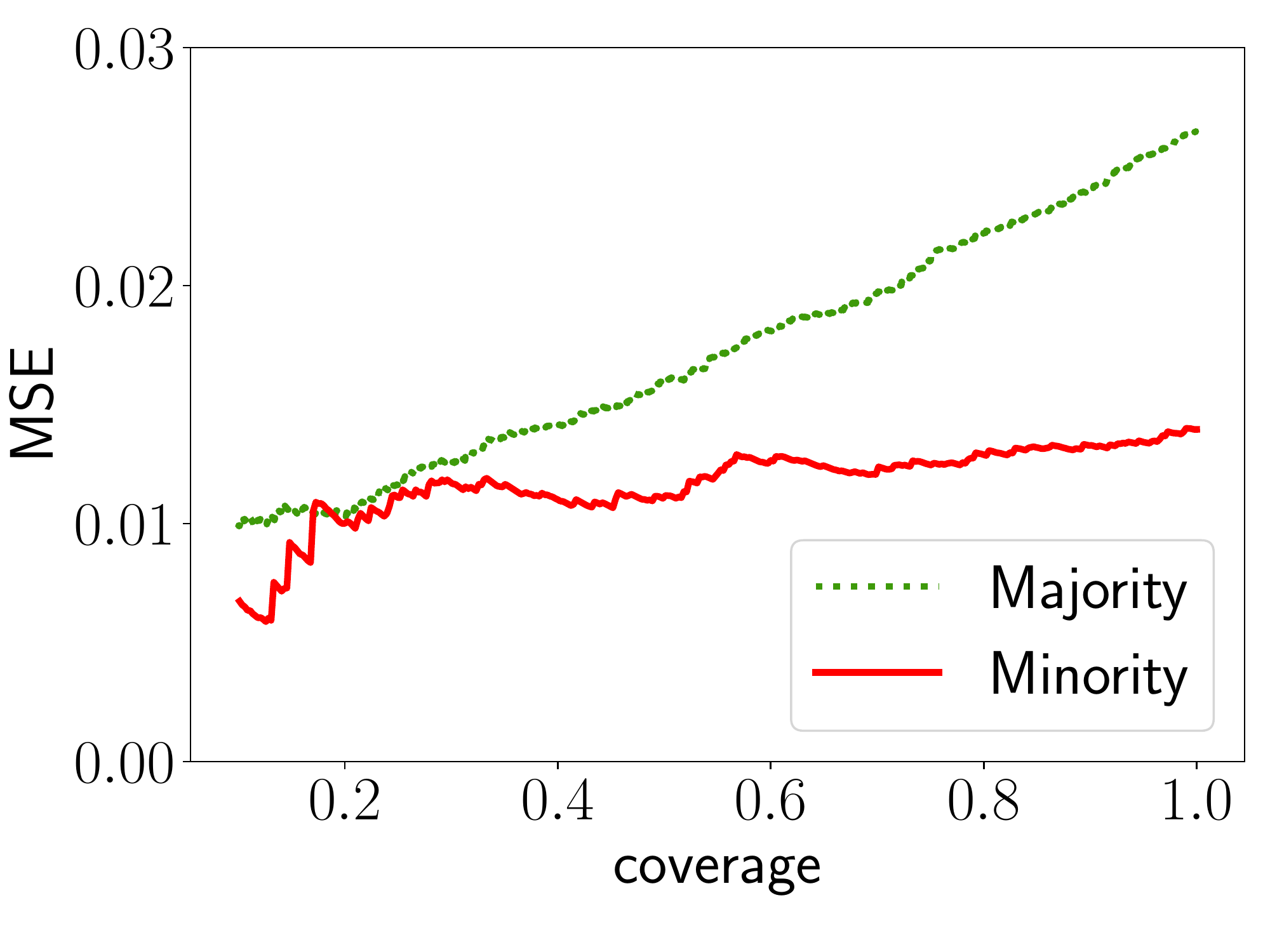}
\caption{The disparity could be mitigated for toy
example when the predictor is the conditional
expectation \& the uncertainty measure
is the conditional variance of $Y$ given $X_1$.}
  \label{fig:synthetic_sufficiency}
\end{subfigure}
\caption{Subgroup MSE vs. coverage to illustrate disparities in selective regression via Insurance dataset and toy example.}
\label{fig:app_toy_expts}
\vspace{-0.8em}
\end{figure*}

\noindent{\bf Insurance dataset.} Consider the task of predicting the annual medical expenses charged to patients from input variables such as age, BMI, number of children, etc., as in Insurance dataset. Suppose we construct our predictor as the conditional expectation and our uncertainty measure as the conditional variance following \cite{zaoui2020regression}. Then, generating the subgroup MSE vs. coverage tradeoff curve\footnote[2]{We have used \texttt{Baseline 1} to generate this (see Section \ref{sec:expts}).} across the subgroups induced by gender, as shown in Figure \ref{fig:disparities_insurance}, we see that while decreasing the coverage improves the performance for the majority subgroup (i.e., females), the performance for the minority subgroup (i.e., males) degrades. Thus, unknowingly, selective regression can magnify disparities between different sensitive subgroups.
Motivated to further understand this phenomenon, we explicitly recreate it via the following toy example.\\

\noindent{\bf Toy example.} Consider predicting $Y$ from two normalized input variables $X_1$ and $X_2$ that are generated i.i.d. from the uniform distribution over $[0,1]$. Suppose we have a binary sensitive attribute $D$ with $\pr(D\!=\!0)=0.9$, where $D = 0$ represents majority and $D = 1$ represents minority. 
To illustrate the disparities that may arise in selective regression, we let the distribution of $Y$ differ with $D$.
More specifically, for majority subgroup, we let the target be
$$
Y|_{D=0}= X_1+X_2+\mathcal{N}(0,0.1 X_1+0.15 X_2)
$$
and, for minority subgroup, we let the target be 
$$
Y|_{D=1}= X_1+X_2+\mathcal{N}(0,0.1 X_1+0.15 (1-X_2)).
$$
To summarize, the only difference is that for the majority, the variance of $Y$ increases in $X_2$, and for the minority, the variance of $Y$ decreases in $X_2$. 

In this case, the conditional variance $\Var(Y|X)$, i.e., our uncertainty measure would mainly capture the behavior of the majority subgroup $D = 0$, i.e., the subgroup with more samples. Since $\Var(Y|X,D\!=\!0)$ differs significantly from $\Var(Y|X,D\!=\!1)$, $\Var(Y|X)$ may not be a good measure of uncertainty for the minority subgroup $D = 1$. As a result, for minority subgroup, when we decrease the coverage, we may make predictions on samples that erroneously achieve low uncertainty based on $\Var(Y|X)$. This results in an increase in the MSE for the subgroup $D = 1$ (Figure \ref{fig:synthetic_baseline}).

An alternative could be to use the conditional variance of $Y$ given $X_1$ (instead of both $X_1$ and $X_2$) as our uncertainty measure. While this may be a slightly worse measure of uncertainty for $D\!=\!0$ than $\Var(Y|X)$, it is a much better measure of uncertainty for $D\!=\!1$. All in all, using this uncertainty measure, when we decrease the coverage, we make predictions on samples with low uncertainty for all subgroups resulting in a decrease in the MSE for every subgroup as shown in Figure \ref{fig:synthetic_sufficiency} (albeit at the cost of a slight increase in the overall MSE).

It is important to note that the toy example is designed to highlight the disparities when the uncertainty measure, a component of selective regression, is designed unfairly. The disparities could generally occur due to the predictor or the uncertainty measure (or both).
\subsection{When is Selective Regression Fair?}
\label{subsec:fair_sr}
Motivated by the disparities (that may arise) in selective regression, as shown above, we define the first notion of fairness for selective regression, which is called \textit{monotonic selective risk}. This notion requires our predictor and uncertainty measure to ensure the primary goal of selective regression that the subgroup MSE decreases monotonically with a decrease in coverage for every subgroup. The subgroup MSE for $d \in \cD$, as a function of the predictor $f$ and the uncertainty measure $g$, for a fixed coverage (parameterized by $\tau$) is given by
$$\MSE(f, g, \tau, d) =  \ev[(Y-f(X))^2|g(X) \le \tau, D=d].$$
Now, we formalize the criteria of \textit{monotonic selective risk} which ensures that no subgroup is discriminated against when the coverage is reduced in selective regression.
\begin{definition}\label{def:fair}
We say that a predictor $f$ and an uncertainty measure $g$ satisfy \textit{monotonic selective risk} if for any $\tau < \tau'$
$$\MSE(f, g, \tau, d) \leq \MSE(f, g, \tau', d) ~~ \forall d \in \cD.$$
\end{definition}
Inspired by the success of representation based learning in machine learning \cite{bengio2013representation}, we seek to find a representation $\Phi : \cX \rightarrow \cH$ that maps the input variable $X \in \cX$ to an intermediate representation $\Phi(X) \in \cH$ 
and use $\Phi(X)$ to construct our predictor $f : \cH \rightarrow \mathbb{R}$ and our uncertainty measure $g : \cH \rightarrow \mathbb{R}_{\geq 0}$. Then, for $X = x$, our prediction and uncertainty measure are (with a slight abuse of notation) $f(\Phi(x))$ and $g(\Phi(x))$ respectively.
\section{Theoretical Results}
In this section, we show that under certain conditions on the feature representation $\Phi$, the conditional mean as the predictor and the conditional variance as the uncertainty measure satisfy \textit{monotonic selective risk} (Definition \ref{def:fair}). 
\subsection{Sufficiency}
The \textit{sufficiency} criterion requires\footnote[3]{While conventionally sufficiency requires $Y\!\perp\!D | \ev[\Phi(X)]$, this notion has been adapted to incorporate feature representation/score function (e.g., Sec 1.1 in \cite{liu2019implicit}), Sec 3.4 in \cite{castelnovo2022clarification}.} $Y\!\perp\!D | \Phi(X)$, i.e., the learned representation $\Phi(X)$ completely subsumes all information about the sensitive attribute that is relevant to the target variable \citep{cleary1966test}. 
Sufficiency is closely tied with learning domain-invariant feature representation \citep{arjovsky2019invariant, creager2020exchanging} and has been used in fair selective classification \citep{lee2021fair}.

The theorem below shows that if the feature representation is sufficient, then the choice of conditional mean as the predictor and the conditional variance as the uncertainty measure ensures the fairness criteria of \textit{monotonic selective risk}, i.e.,  the subgroup MSE decreases monotonically with coverage for all subgroups.
See Appendix \ref{appendix:proof_theorem} for a proof.
\begin{restatable}{theorem}{thmSufficiency}
\label{thm:sufficiency}
Suppose the representation $\Phi(X)$ is sufficient i.e., $Y\!\perp\!D | \Phi(X)$. Let $f(\Phi(X)) = \ev[Y | \Phi(X)]$ and $g(\Phi(X)) = \Var[Y | \Phi(X)]$. Then, for any $d \in \cD$ and any $\tau < \tau'$, we have $\MSE(f, g, \tau, d) < \MSE(f, g, \tau', d)$.
\end{restatable}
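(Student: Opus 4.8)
The plan is to show that, under sufficiency, the subgroup-conditional risk collapses exactly to the conditional expectation of the uncertainty measure over the acceptance region, and then to invoke monotonicity of a truncated mean. First I would abbreviate $Z = \Phi(X)$, so that $f(Z) = \ev[Y \mid Z]$ and $g(Z) = \Var[Y \mid Z]$. The first key step is to evaluate the pointwise conditional risk $\ev[(Y - f(Z))^2 \mid Z = z, D = d]$. Because sufficiency gives $Y \perp D \mid Z$, the conditional law of $Y$ given $(Z, D)$ coincides with that given $Z$ alone; hence conditioning further on $D = d$ changes neither the conditional mean nor the conditional variance, and this pointwise risk equals $\Var[Y \mid Z = z] = g(z)$.

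Next I would apply the tower property inside the acceptance event $\{g(Z) \le \tau, D = d\}$. Writing $\MSE(f, g, \tau, d) = \ev[\,\ev[(Y - f(Z))^2 \mid Z, D = d] \mid g(Z) \le \tau, D = d\,]$ and substituting the pointwise identity above yields the clean reduction
$$\MSE(f, g, \tau, d) = \ev[\, g(Z) \mid g(Z) \le \tau, D = d \,].$$
This is the conceptual heart of the argument: sufficiency turns a statement about prediction error into a statement purely about the distribution of the uncertainty measure $W \defeq g(Z)$ within subgroup $d$.

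It then remains to show that the truncated mean $\tau \mapsto \ev[W \mid W \le \tau, D = d]$ is increasing. For $\tau < \tau'$ I would partition $\{W \le \tau'\}$ into $\{W \le \tau\}$ and $\{\tau < W \le \tau'\}$ and express the larger truncated mean as a convex combination of the two corresponding conditional means, with weights $\pr(W \le \tau \mid W \le \tau', D = d)$ and $\pr(\tau < W \le \tau' \mid W \le \tau', D = d)$. Since the conditional mean over $\{W \le \tau\}$ is at most $\tau$ while the conditional mean over $\{\tau < W \le \tau'\}$ strictly exceeds $\tau$, this convex combination strictly exceeds the smaller truncated mean whenever the second weight is positive.

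The hard part will be the strictness claimed in the theorem (as opposed to the non-strict inequality of Definition \ref{def:fair}). The convex-combination argument delivers a strict inequality only when the interval $(\tau, \tau']$ carries positive mass within subgroup $d$, i.e. $\pr(\tau < g(Z) \le \tau' \mid D = d) > 0$, and when the level-$\tau$ acceptance event has positive probability so that both conditional expectations are well defined. I would therefore either record a mild regularity assumption on the law of $g(Z) \mid D = d$ (e.g. no atom or flat region straddling the thresholds) under which $<$ holds for every $\tau < \tau'$, or, absent such an assumption, conclude the non-strict monotonicity in general and upgrade to $<$ precisely on thresholds separated by positive mass. The two reduction steps are where the sufficiency hypothesis does all the work; once the risk is rewritten as a truncated mean, the monotonicity is standard.
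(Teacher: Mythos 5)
Your proposal is correct and follows essentially the same route as the paper's proof: sufficiency plus the tower property reduces the subgroup risk to the truncated mean $\ev[g(\Phi(X)) \mid g(\Phi(X)) \le \tau, D = d]$, and monotonicity then follows from partitioning $\{g(\Phi(X)) \le \tau'\}$ into $\{g(\Phi(X)) \le \tau\}$ and $\{\tau < g(\Phi(X)) \le \tau'\}$ and comparing the two pieces against the threshold $\tau$. Your caveat about strictness is well taken rather than a flaw in your argument --- the paper's corresponding step, which asserts $\ev[\Indicator(\tau < g(\Phi(X)) \le \tau')\, g(\Phi(X)) \mid D = d] > \tau\, \ev[\Indicator(\tau < g(\Phi(X)) \le \tau') \mid D = d]$, likewise silently requires the interval $(\tau,\tau']$ to carry positive mass under the law of $g(\Phi(X))$ given $D=d$ (and $c_{\tau,d}>0$ for the conditional expectations to be defined), so the strict inequality in the theorem statement needs exactly the regularity condition you identified.
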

\subsection{Calibration for mean and variance}
\label{subsec:calibration}
In practice, the conditional independence $Y\!\perp\!D | \Phi(X)$ required by sufficiency  may be too difficult to satisfy. Since we only care about MSE, which depends on the first and second-order moments,
one could relax the sufficiency condition by requiring the representation $\Phi$ to be such that these moments are the same across all subgroups. This inspires our notion of $\Phi$ \textit{calibrated for mean and variance}.  
\begin{definition}\label{def:cali}
We say a representation $\Phi(X)$ is \textit{calibrated for mean and variance} if
\begin{align}
    \ev[Y | \Phi(X), d] & = \ev[Y | \Phi(X)] ~~\forall d \in \cD, \label{eq:cali_mean} \\
    \qquad \Var[Y | \Phi(X), d] & = \Var[Y | \Phi(X)] ~~\forall d \in \cD. \label{eq:cali_var}
\end{align}
\end{definition}


The theorem below shows that if the feature representation is \textit{calibrated for mean and variance}, then the choice of conditional mean as the predictor and the conditional variance as the uncertainty measure ensures the fairness criteria of \textit{monotonic selective risk}, i.e.,  the subgroup MSE decreases monotonically with coverage for all subgroups. The proof is similar to the proof of Theorem \ref{thm:sufficiency} and is omitted.
\begin{restatable}{theorem}{thmCali}
\label{thm:cali}
Suppose the representation $\Phi(X)$ is calibrated for mean and variance. Let $f(\Phi(X)) = \ev[Y | \Phi(X)]$ and $g(\Phi(X)) = \Var[Y | \Phi(X)]$. Then, for any $d \in \cD$ and any $\tau < \tau'$, we have $\MSE(f, g, \tau, d) < \MSE(f, g, \tau', d)$.
\end{restatable}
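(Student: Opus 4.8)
The plan is to mirror the two-stage strategy behind Theorem~\ref{thm:sufficiency}: first collapse the inner conditional risk $\ev[(Y-f(\Phi(X)))^2\mid \Phi(X),D=d]$ down to the uncertainty measure $g(\Phi(X))$ itself, and then reduce the whole statement to the elementary fact that a truncated conditional mean increases in the truncation level. Writing $\Phi\defeq\Phi(X)$ and noting that $f(\Phi)=\ev[Y\mid\Phi]$ and $g(\Phi)=\Var[Y\mid\Phi]$ are both $\sigma(\Phi)$-measurable, the acceptance event $\{g(\Phi)\le\tau\}$ is $\sigma(\Phi)$-measurable as well; this is exactly what makes the subsequent tower-property manipulation clean. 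The difference from the sufficiency case is that there $Y\!\perp\!D\mid\Phi$ forces all conditional moments to coincide, whereas here only the first two moments are controlled by \eqref{eq:cali_mean}--\eqref{eq:cali_var}, which is all the MSE ever sees.

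First I would compute the inner risk. Decomposing $Y-\ev[Y\mid\Phi]=\big(Y-\ev[Y\mid\Phi,d]\big)+\big(\ev[Y\mid\Phi,d]-\ev[Y\mid\Phi]\big)$ and taking $\ev[\,\cdot\mid\Phi,D=d\,]$, the cross term vanishes because $\ev[\,Y-\ev[Y\mid\Phi,d]\mid\Phi,d\,]=0$, leaving
\begin{equation}
\ev[(Y-f(\Phi))^2\mid\Phi,D=d]=\Var[Y\mid\Phi,d]+\big(\ev[Y\mid\Phi,d]-\ev[Y\mid\Phi]\big)^2 .
\end{equation}
Calibration for the mean \eqref{eq:cali_mean} kills the squared-bias term, and calibration for the variance \eqref{eq:cali_var} rewrites $\Var[Y\mid\Phi,d]$ as $\Var[Y\mid\Phi]=g(\Phi)$, so the inner risk equals $g(\Phi)$ for every $d$. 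Applying the tower property over $\Phi$ on the event $\{g(\Phi)\le\tau\}\cap\{D=d\}$, and using that this event is $\sigma(\Phi)$-measurable so that the inner conditioning on $\{g(\Phi)\le\tau\}$ is redundant, yields the clean identity $\MSE(f,g,\tau,d)=\ev[\,g(\Phi)\mid g(\Phi)\le\tau,\,D=d\,]$.

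The remaining step is to show this truncated mean is strictly increasing in $\tau$. For $\tau<\tau'$ I would split $\{g(\Phi)\le\tau'\}$ into the disjoint events $A=\{g(\Phi)\le\tau\}$ and $B=\{\tau<g(\Phi)\le\tau'\}$, so that $\MSE(f,g,\tau',d)$ is a convex combination of $\ev[g(\Phi)\mid A,D=d]$ and $\ev[g(\Phi)\mid B,D=d]$. Since $g(\Phi)\le\tau$ on $A$ while $g(\Phi)>\tau$ on $B$, the second conditional mean strictly exceeds $\tau$ and hence strictly exceeds the first, so the convex combination strictly exceeds $\MSE(f,g,\tau,d)$. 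I expect the main obstacle to be precisely this \emph{strictness}: the argument only delivers a strict inequality when $\prob{\tau<g(\Phi)\le\tau'\mid D=d}>0$, so the theorem tacitly presumes that the conditional law of the uncertainty measure places positive mass on every such interval (e.g.\ $g(\Phi)\mid D=d$ admits a density of full support); if this non-degeneracy fails, one only recovers the weak inequality $\le$. Everything else is routine bookkeeping of the conditioning events and requires $\prob{g(\Phi)\le\tau\mid D=d}>0$ for the quantities to be well-defined.
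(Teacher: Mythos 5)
Your proposal is correct and follows essentially the same route as the paper's proof of Theorem \ref{thm:sufficiency} (which the paper states carries over to Theorem \ref{thm:cali}): both use the tower property together with the calibration identities \eqref{eq:cali_mean}--\eqref{eq:cali_var} to collapse the inner conditional risk to $g(\Phi(X))$, and then compare truncated conditional means of $g(\Phi(X))$ at levels $\tau$ and $\tau'$. Your caveat about strictness is well taken --- the paper's own step \eqref{eq:proof11} is likewise strict only when $\prob{\tau<g(\Phi(X))\le\tau'\mid D=d}>0$, a non-degeneracy condition the paper leaves implicit.
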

\section{Algorithm Design}
\label{sec:algo_design}
In this section, we provide two neural network-based algorithms: one to impose sufficiency and the other to impose the calibration for mean and variance.

\subsection{Imposing sufficiency}
\label{subsec:hetero}
To simplify our algorithm when directly enforcing sufficiency, we utilize the framework of heteroskedastic neural network \cite{gal2016uncertainty}.
A heteroskedastic neural network, which requires training only a single neural network, assumes that the distribution of $Y$ conditioned on $X$ is Gaussian. Then, it is trained by minimizing the negative log likelihood:
$$L_G(\Phi, \theta) \triangleq - \sum_{i=1}^n \log \pr_G(y_i|\Phi(x_i);\theta)$$
where $\pr_G(y|\Phi(x) ;\theta)$ represents a Gaussian distribution with $f (\Phi(x); \theta_f)$ and $g(\Phi(x);\theta_g)$ as the conditional mean and the conditional variance (of $Y$ given $\Phi(X)$) respectively. The feature representation $\Phi$ is parameterized by $\theta_{\Phi}$ and the neural network is supposed to learn the parameters $\theta_{\Phi}$ and $\theta = (\theta_f, \theta_g)$.

\noindent To impose sufficiency, we augment minimizing the negative log likelihood as follows:
$$\min_{\theta,\Phi} \quad  L_G(\Phi, \theta),\quad
    \mathrm{s.t.} \quad Y \perp D | \Phi(X).$$
To relax the hard constraint of $Y \perp D | \Phi(X)$ into a soft constraint, we use the conditional mutual information, since $Y \perp D | \Phi(X)$ is equivalent to $I(Y;D|\Phi(X)) = 0$. For $\lambda \geq 0$,
$$\min_{\theta,\Phi} L_G(\Phi, \theta) + \lambda I(Y;D|\Phi(X)).$$
As discussed in \cite{lee2021fair}, existing methods using mutual information for fairness are ill-equipped to handle conditioning on the feature representation $\Phi(\cdot)$. Therefore, we further relax the soft constraint by using the following upper bound for $I(Y;D|\Phi(X))$ from \cite{lee2021fair}:
\begin{align}
\label{bound}
I(Y;D|\Phi(X))\le \Ed{\Phi(X),Y,D}{\log \pr (Y|\Phi(X),D)} - \Ed{D}{ \Ed{\Phi(X),Y}{\log \pr (Y|\Phi(X),{D})}}.
\end{align}
where equality is achieved if and only if $Y \perp D \mid \Phi(X)$.

In order to compute the upper bound in \eqref{bound}, we need to learn the unknown distribution $\pr(y|\Phi(x),d)$. We approximate this by $\pr_G(y|\Phi(x),d ;w)$ which is a Gaussian distribution with $f (\Phi(x),d; w_f)$ and $g(\Phi(x),d;w_g)$ as the conditional mean and the conditional variance of $Y$ given $\Phi(X)$ and $D$, respectively. The neural network is supposed to learn the parameters $w = (w_f, w_g)$.

\begin{algorithm}[tb]
\caption{Heteroskedastic neural network with sufficiency-based regularizer}
\label{alg:suff}
\begin{algorithmic}
    \STATE {\bfseries Input:} training samples $\{(x_i,y_i,d_i)\}_{i=1}^{n}$, regularizer $\lambda$
    \STATE {\bfseries Draw:} $\{\widetilde{d}_1,\dots,\widetilde{d}_n\}$ drawn i.i.d. from $\hat{\pr}_D$
    \STATE {\bfseries Initialize:} $\theta$, $\theta_{\Phi}$, and $w^{(d)}$ with pre-trained models 
    \STATE {\bfseries Initialize:} $n_d = $ number of samples in group $d$, $\forall d \in \cD$
\FOR{each training iteration}
    \FOR{each batch}
        \FOR{$d=1,\dots,|\cD|$}
        \STATE{\hspace{-3mm}\textBlue{\# update subgroup-specific mean/variance predictor}}
        \STATE {
        $w^{(d)} \leftarrow w^{(d)} - \frac{1}{n_d} \eta_{w} \nabla_{w} L_d(w)$}
        \ENDFOR
    \ENDFOR
    \FOR{each batch}
    \STATE{\textBlue{\# update feature extractor}}
    \STATE {$\theta_{\Phi} \leftarrow \theta_{\Phi} - \frac{1}{n} \eta \nabla_{\theta_{\Phi}} (L_G(\Phi, \theta)+\lambda L_R(\Phi))$}  
    \STATE{\textBlue{\# update mean/variance predictor}}
    \STATE {$\theta \leftarrow \theta - \frac{1}{n} \eta \nabla_{\theta} L_G(\Phi, \theta)$}
    \ENDFOR
\ENDFOR
\end{algorithmic}
\end{algorithm}
In scenarios where $\Phi(X)$ is high-dimensional compared to $D$, it would be preferred to approximate $\pr_G(y|\Phi(x),d ;w)$ by $\pr_G(y|\Phi(x);w^{(d)})$, i.e., train a \textit{subgroup-specific} Gaussian model with parameters $w^{(d)}$ for each $d\in \cD$ instead of using $D$ as a separate input to ensure that $D$ has an effect in $\pr_G(y|\Phi(x),d ;w)$. Then, for $d \in \cD$,
\begin{align}\label{equ:fitting}
    w^{(d)} = \argmin_{w}  L_d(w), \qquad \text{where}\\
    L_d(w) \triangleq -\sum_{i:\ d_i=d}{\log \pr_G (y_i|\Phi(x_i);w)}. 
\end{align}

To summarize, the first term of the upper bound in \eqref{bound} is approximated by the log-likelihood of the training samples using $\pr_G(y|\Phi(x);w^{(d_i)})$ for each subgroup $d_i \in \cD$ (i.e., subgroup-specific loss). 
Then, drawing $\widetilde{d}_i$ i.i.d from ${\pr}_D$ i.e., the marginal distribution of $D$ (which could be approximated by $\hat{\pr}_D$), the second term of the upper bound in \eqref{bound} is approximated by the negative log-likelihood of the samples using the  randomly-selected Gaussian model $\pr_G(y|\Phi(x);w^{(\widetilde{d}_i)})$ for each subgroup $\widetilde{d}_i \in \cD$ (i.e., subgroup-agnostic loss).
Combining everything and replacing all the expectations in 
\eqref{bound} with empirical averages, the regularizer is given by
\begin{equation}\label{equ:reg}
    L_R(\Phi) \triangleq \sum_{i=1}^n\log \Big(\frac{\pr_G  (y_i|\Phi(x_i);w^{(d_i)})}{ \pr_G (y_i|\Phi(x_i);w^{(\widetilde{d}_i)})} 
     \Big),
\end{equation}
where $\widetilde{d}_i$ are drawn i.i.d. from the marginal distribution $\hat{\pr}_D$. Summarizing, the overall objective is
\begin{equation}\label{equ:feature_update}
      \min_{\theta,\Phi} L_G(\Phi, \theta) + \lambda L_R(\Phi).
\end{equation}
As shown in Algorithm \ref{alg:suff}, we train our model by alternating between the fitting subgroup-specific models in \eqref{equ:fitting} and feature updating in \eqref{equ:feature_update}. 
\subsection{Imposing calibration for mean and variance}
\label{subsec:residue}
To achieve the calibration for mean and variance, we let the representation $\Phi = (\Phi_1, \Phi_2)$. Then, to enable the use of the residual-based neural network \cite{hall1989variance}, we let the conditional expectation depend only on $\Phi_1$ i.e., $f(\Phi(X)) = f(\Phi_1(X))$ and let the conditional variance depend only on $\Phi_2$ i.e., $g(\Phi(X)) = g(\Phi_2(X))$. This method is useful in scenarios where the conditional Gaussian assumption in the Section \ref{subsec:hetero} does not hold.

In a residual-based neural network, the conditional mean-prediction network is trained by minimizing:
$$  L_{S1} (\Phi_1, \theta_f) \triangleq \sum_{i=1}^n (y_i - f (\Phi_1(x_i); \theta_f) )^2.$$
The feature representation $\Phi_1$ is parameterized by $\theta_{\Phi_1}$ and the mean-prediction network is supposed to learn the parameters $\theta_{\Phi_1}$ and $\theta_f$.

Then, the conditional variance-prediction network is trained by fitting the residuals obtained from the mean-prediction network, i.e., $r_i \triangleq (y_i - f (\Phi_1(x_i); \theta_f))^2$ by minimizing:
$$L_{S2} (\Phi_2, \theta_g) \triangleq  \sum_{i=1}^n (r_i - g (\Phi_2(x_i); \theta_g) )^2.$$
The feature representation $\Phi_2$ is parameterized by $\theta_{\Phi_2}$ and the variance-prediction network is supposed to learn the parameters $\theta_{\Phi_2}$ and $\theta_g$.

To impose calibration under mean, we need to convert the following hard constraint
\begin{align}
    \ev[Y | \Phi_1(X), D] = \ev[Y | \Phi_1(X)] \label{eq:cali_mean_1}
\end{align}
into a soft constraint. We do this by using the following contrastive loss: 
\begin{align}\label{equ:MSE_contrastive}
    &\Ed{D}{ \Ed{\Phi(X),Y}{(Y-\ev[Y|\Phi_1(X),D])^2}}  \nonumber \\
     &\qquad- \Ed{\Phi(X),Y,D}{(Y-\ev[Y|\Phi_1(X),D])^2},
\end{align}
which is inspired from \eqref{bound} and obtained by replacing the negative log-likelihood $-\log \pr (Y|\Phi(X),D)$ in \eqref{bound} by the MSE achieved using the representation $\Phi_1(X)$ and sensitive attribute $D$. We emphasize that \eqref{equ:MSE_contrastive} is zero when \eqref{eq:cali_mean_1} holds and therefore \eqref{equ:MSE_contrastive} is a relaxation of \eqref{eq:cali_mean_1}.

To compute \eqref{equ:MSE_contrastive}, we need to learn the unknown conditional expectation $\ev[Y|\Phi_1(X),D]$. Similar to Section \ref{subsec:hetero}, we approximate this by $f(y|\Phi_1(x);w^{(d)}_f)$ (i.e., train a \textit{subgroup-specific} mean-prediction model with parameters $w^{(d)}_f$ for each $d\in \cD$, instead of using $D$ as a separate input).
Similar to Section \ref{subsec:hetero}, combining everything and replacing all the expectations in \eqref{equ:MSE_contrastive} with empirical averages, the  regularizer for the mean-prediction network is given by
\begin{align}\label{equ:RS1}
    L_{R1}(\Phi_1) \triangleq  \sum_{i=1}^n &\Big(\big(y_i - f (\Phi_1(x_i); w_f^{(\widetilde{d}_i)}) \big)^2 \\
    &\qquad - \big(y_i - f (\Phi_1(x_i); w_f^{(d_i)}) \big)^2\Big), \nonumber
    \vspace{-2mm}
\end{align}
where $\widetilde{d}_i$ are drawn i.i.d. from ${\pr}_D$ i.e., the marginal distribution of $D$ (approximated by $\hat{\pr}_D$) and for $d \in \cD$,
$$w_f^{(d)} = \argmin_{w}  \sum_{i:\ d_i=d} \big(y_i - f (\Phi_1(x_i); w) \big)^2.$$
In summary, the overall objective for mean-prediction is
\begin{align}
    \min_{\theta_f,\Phi_1} L_{S1} (\Phi_1,\theta_f) + \lambda_1 L_{R1}(\Phi_1). \label{eq:in1}
\end{align}
Once the mean-prediction network is trained, we obtain the residuals and train the variance-prediction network $g$ using a similar regularizer:
\begin{align}
    \min_{\theta_g,\Phi_2} L_{S2} (\Phi_2,\theta_g) + \lambda_2 L_{R2}(\Phi_2), \label{eq:in2}
\end{align}
where $L_{S2}$ and $L_{R2}$ are defined in a similar manner as $L_{S1}$ and $L_{R1}$. More details about these and the pseudo-code (i.e., Algorithm \ref{alg:two_stage}) are provided in the Appendix~\ref{appendix:algo_details}.
\section{Experimental Results}
\label{sec:expts}
\textbf{Datasets.} 
We test our 
algorithms on 
Insurance and Crime datasets, and provide an application of our method in Causal Inference via IHDP  dataset.
These datasets (summarized in Table \ref{table:datasets}) are selected due to their potential fairness concerns, e.g., (a) presence of features often associated with possible discrimination, such as race and sex, and (b) potential sensitivity regarding the predictions being made such as medical expenses, violent crimes, and cognitive test score.
\begin{table}[h]
\caption{Summary of datasets.}
\label{table:datasets}
\centering
\begin{tabular}{p{18mm}p{40mm}p{14mm}}\\
\toprule
\textbf{Dataset} & \textbf{Target} & \textbf{Attribute} \\
\midrule
Insurance & Medical Expenses & Sex \\
\midrule
Crime & Crimes per Population & Race \\
\midrule
IHDP & Cognitive Test Score & Sex \\
\bottomrule
\end{tabular}
\end{table}

\noindent \textbf{Insurance.} The Insurance dataset \citep{lantz2019machine} 
considers predicting total annual medical expenses charged to patients using demographic statistics.
Following \cite{chi2021understanding}, we use sex as the sensitive attribute: $D=1$ (i.e., minority) if male otherwise 0.
After preprocessing (see Appendix \ref{appendix:experiments}), the dataset contains 1000 samples (338 with  $D= 1$ and 662 with $D= 0$) and 5 features. 

\noindent \textbf{Communities and Crime.} The Crime dataset \citep{redmond2002data} considers predicting the number of violent crimes per 100K population using socio-economic information of communities in U.S.
Following \cite{chi2021understanding}, we use race (binary)\footnote[4]{We provide results for the scenario where race can take more than two values in in Appendix \ref{subsec:three_subgroups}.} as the sensitive attribute: $D\!=\!1$ (i.e., minority) if the population percentage of the black is more or equal to 20 otherwise 0. 
After preprocessing (see Appendix \ref{appendix:experiments}), the dataset contains 1994 samples (532 with $D\!=\! 1$ and 1462 with $D\!=\!0$) and 99 features.\\

\noindent \textbf{IHDP.} The IHDP dataset \citep{hill2011bayesian} is generated based on a randomized control trial targeting low-birth-weight, premature infants. In the treated group, the infants were provided with both intensive and high-quality childcare and specialist home visits. 
The task is to predict the infants' cognitive test scores.
We let sex be the sensitive attribute and observe that \textit{male} is the minority group (i.e., $D = 1$) in both the control group and the treatment group. 
After preprocessing (see Appendix \ref{appendix:experiments}), the control group has 608 samples (296 with $D = 1$ and 312 with $D = 0$) and the treatment group has 139 samples (67 with $D = 1$ and 72 with $D = 0$). The dataset contains 25 features.\\

\noindent \textbf{Choice of $\lambda$.} We observe our algorithms to be agnostic to the choice of $\lambda$ as long as it is in a reasonable range, i.e., $\lambda \in [0.5, 3]$. To be consistent, we set $\lambda = 1$ throughout. \\

\noindent \textbf{Baselines.} We compare against the following baselines:
\begin{itemize}[leftmargin=*,topsep=-2pt,itemsep=-4pt]
    \item \texttt{Baseline 1}: Heteroskedastic neural network without any regularizer i.e., Algorithm \ref{alg:suff} with $\lambda = 0$.
    \item \texttt{Baseline 2}: Residual-based neural network without any regularizer i.e., Algorithm \ref{alg:two_stage} with $\lambda = 0$.\\
\end{itemize}

\noindent \textbf{Experimental Details.}
In all of our experiments, we use two-layer neural networks, and train our model only once on a fixed training set. We evaluate and report the empirical findings on a held-out test set with a train-test split ratio of $0.8/0.2$. More experimental details can be found in Appendix \ref{appendix:experiments}.\\

\noindent \textbf{Comparison in terms of selective regression.}
To compare different algorithms in terms of how well they perform selective regression (i.e., without fairness), we look at area under MSE vs. coverage curve (AUC), which encapsulates performance across different coverage \cite{franc2019discriminative, lee2021fair}. We provide the results in Table \ref{table:aucs_app} (smaller AUC indicates better performance)
and observe that our algorithms are competitive (if not better) than baselines. We provide MSE vs. coverage curves in Appendix \ref{appendix:experiments}.\\

\noindent \textbf{Comparison in terms of fairness.}
To compare different algorithms in terms of  \emph{fair} selective regression, we look at subgroup MSE vs coverage curves. for the Insurance dataset, we show these curves for \texttt{Baseline 2} in Figure \ref{fig:insurance_baseline_two} and Algorithm \ref{alg:two_stage} in Figure \ref{fig:insurance_suff_two}. for the Crime dataset, we show these curves for \texttt{Baseline 1} in Figure \ref{fig:crime_baseline_one} and Algorithm \ref{alg:suff} in Figure \ref{fig:crime_suff_one}. 
See Appendix \ref{appendix:experiments} for remaining set of curves.

\begin{figure*}[h]
\centering
\begin{subfigure}{.32\textwidth}
\centering
  \captionsetup{justification=centering}
  \includegraphics[width=0.85\linewidth]{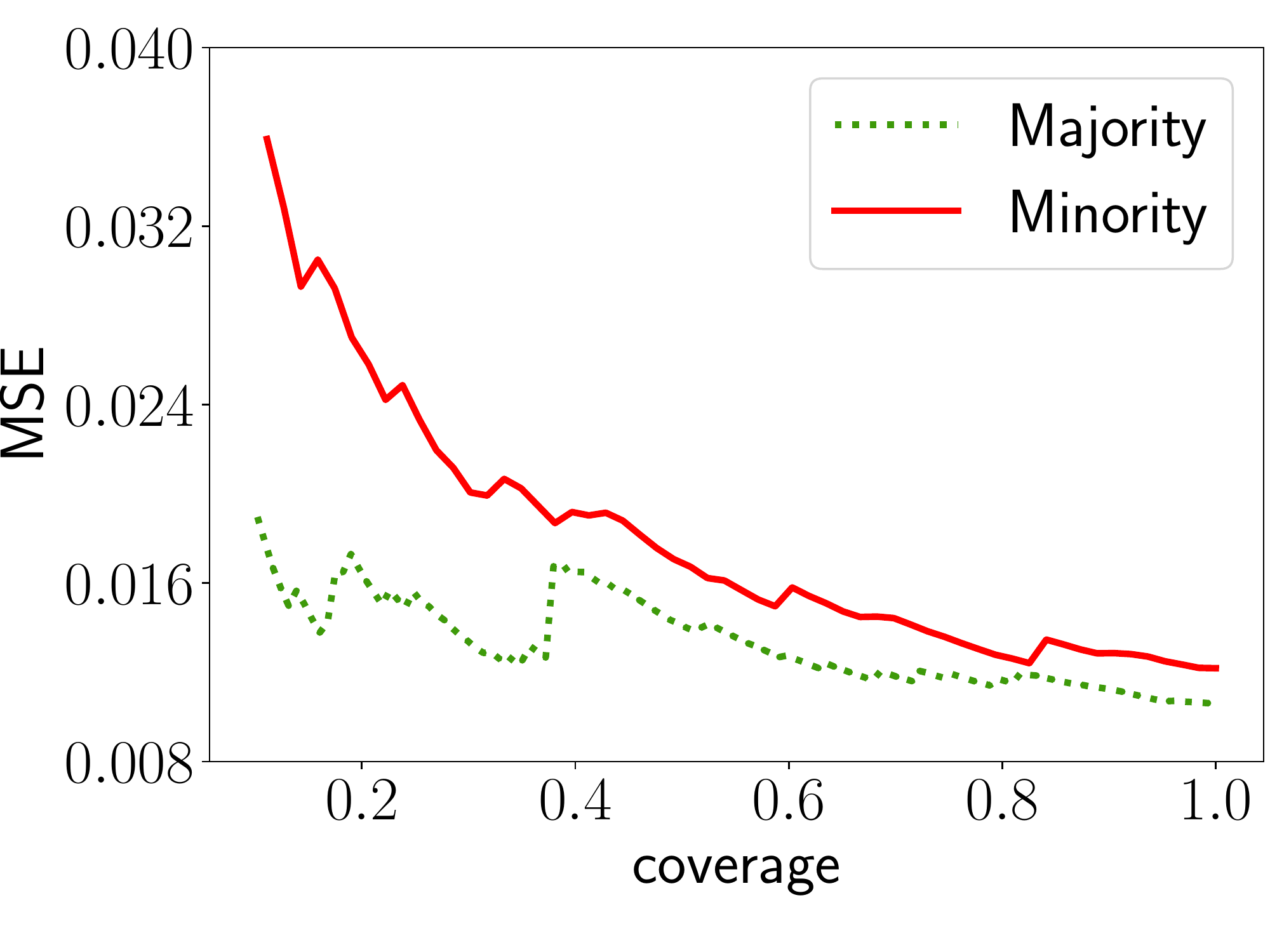}
  \caption{Performance of \texttt{Baseline 2}\\ for the Insurance dataset.}
  \label{fig:insurance_baseline_two}
  \includegraphics[width=0.85\linewidth]{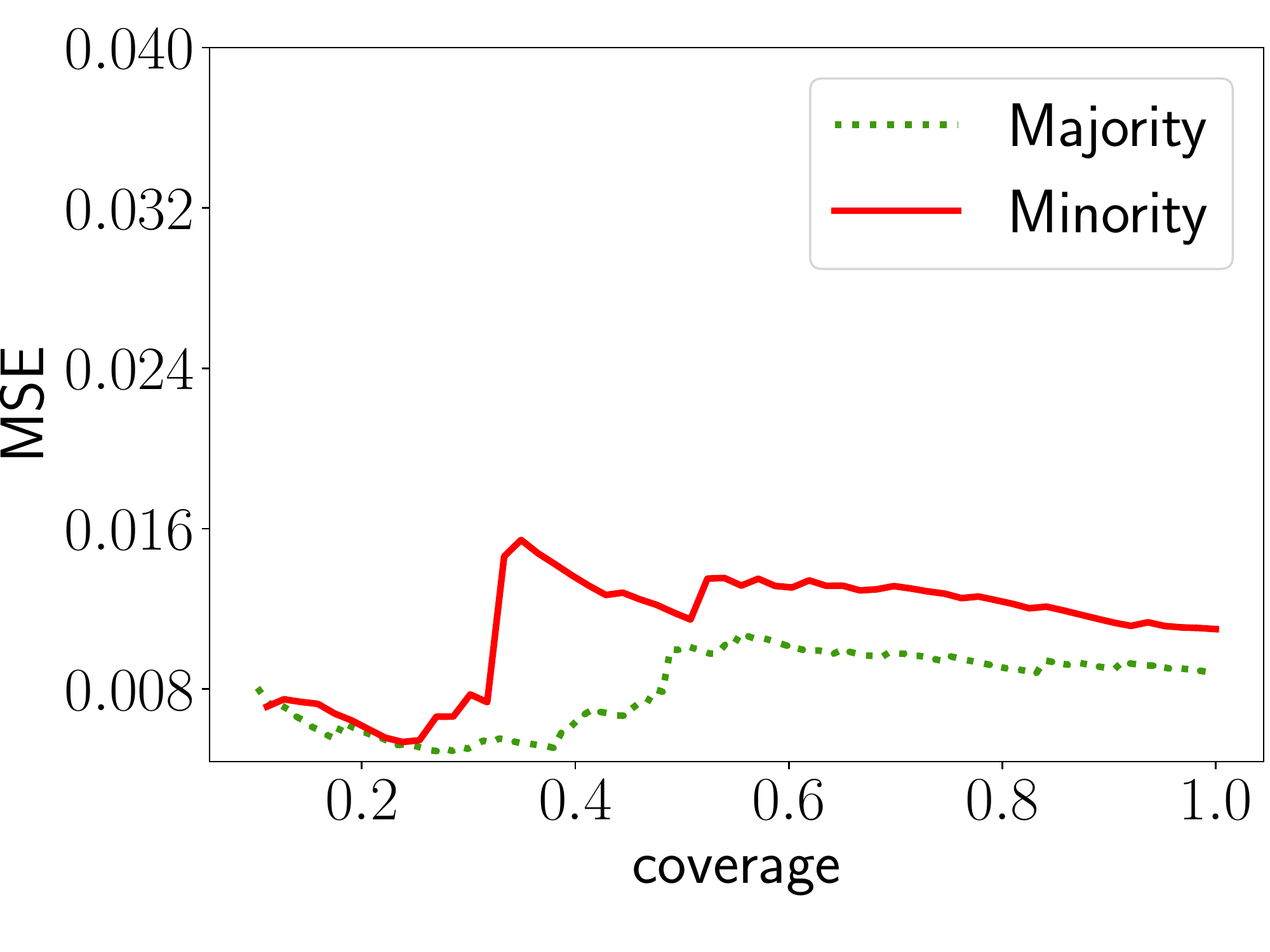}
 \caption{Performance of Algorithm \ref{alg:two_stage} \\for the Insurance dataset.}
  \label{fig:insurance_suff_two}
\end{subfigure}\hfill
\begin{subfigure}{.32\textwidth}
\centering
  \captionsetup{justification=centering}
  \includegraphics[width=0.85\linewidth]{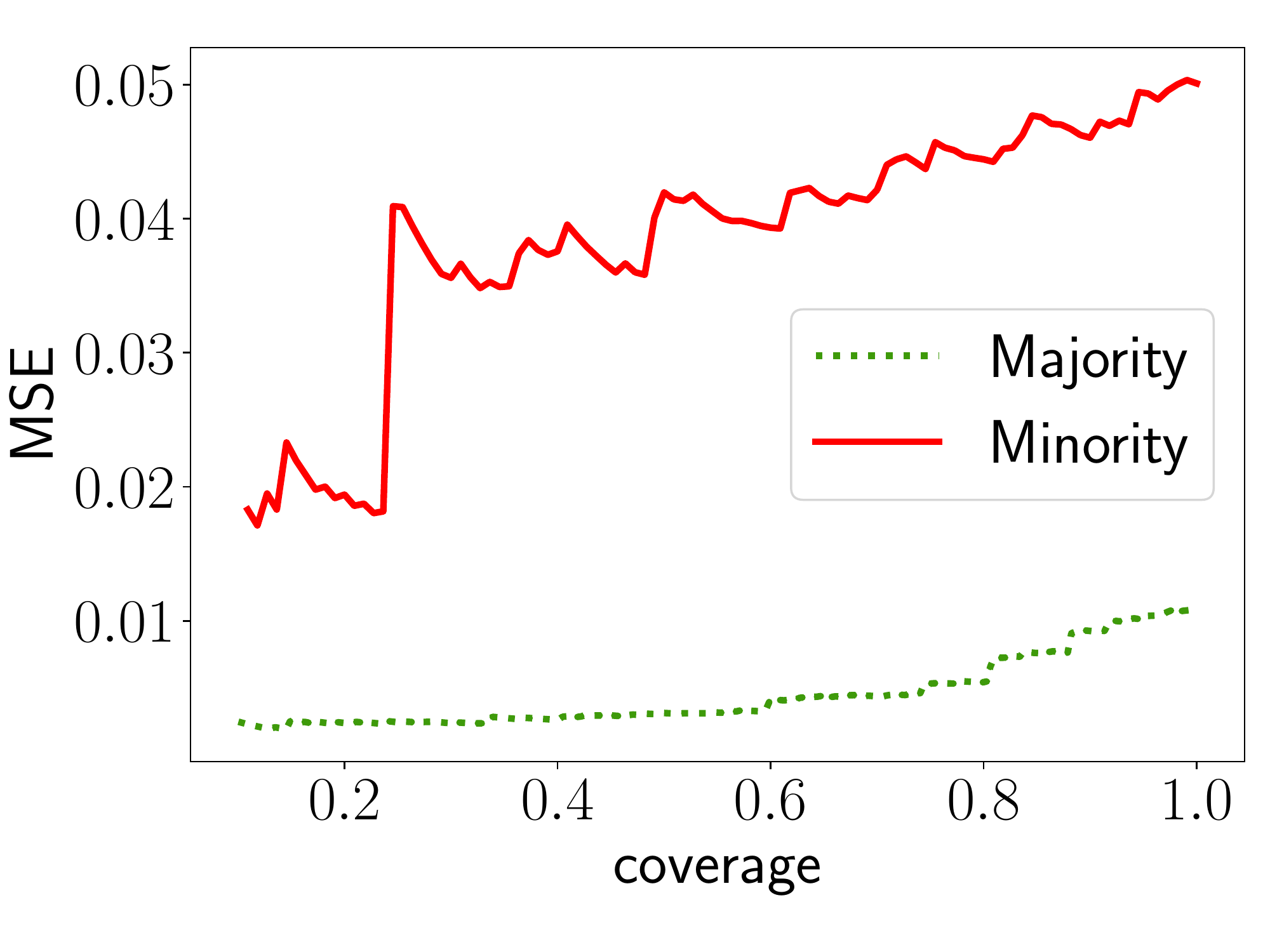}
  \caption{Performance of \texttt{Baseline 1} \\ for the Crime dataset.}
  \label{fig:crime_baseline_one}
  \includegraphics[width=0.85\linewidth]{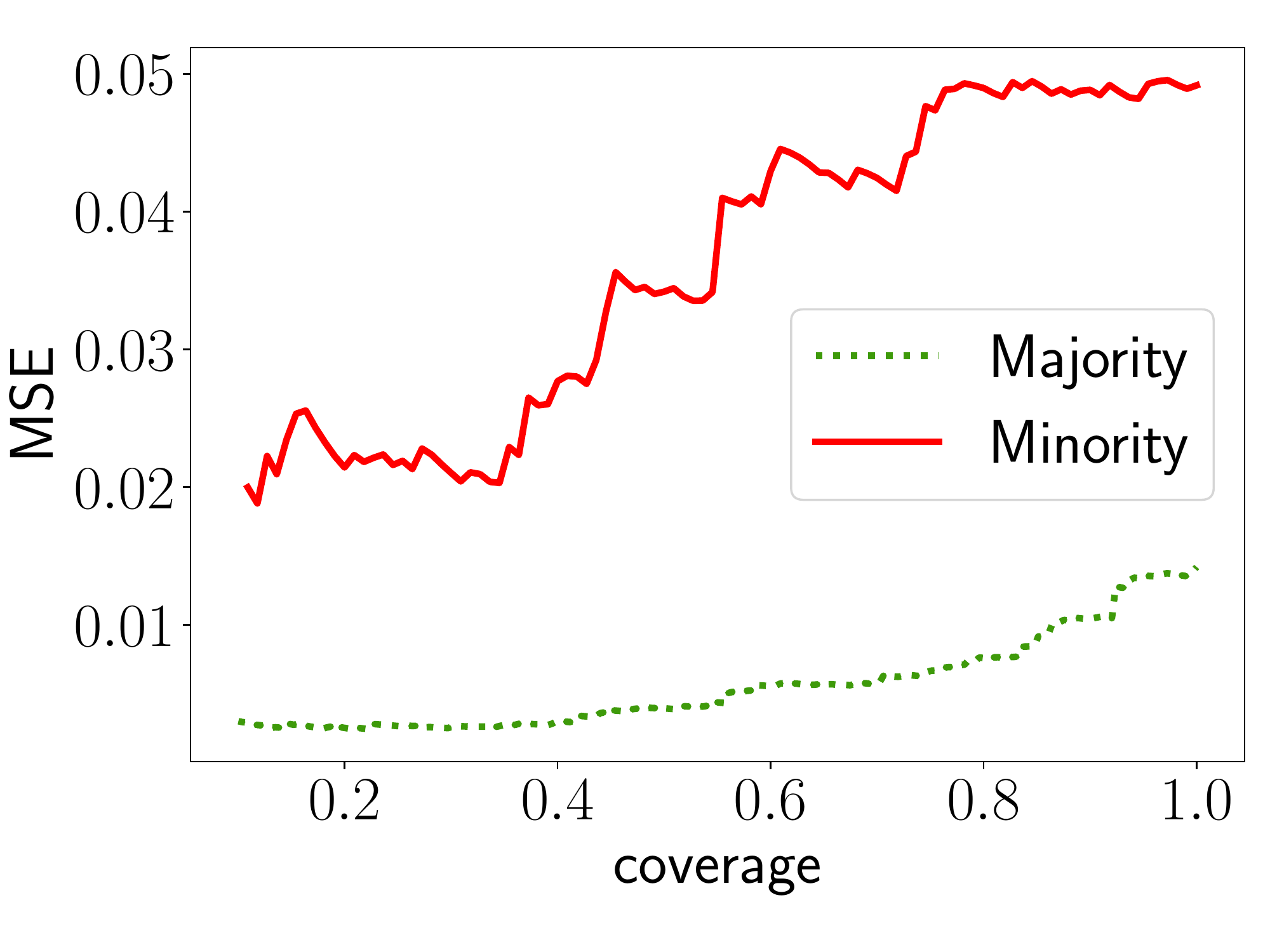}
  \caption{Performance of Algorithm \ref{alg:suff} \\ for the Crime dataset.}
  \label{fig:crime_suff_one}
\end{subfigure}\hfill
\begin{subfigure}{.32\textwidth}
\centering
  \captionsetup{justification=centering}
  \includegraphics[width=0.83\linewidth]{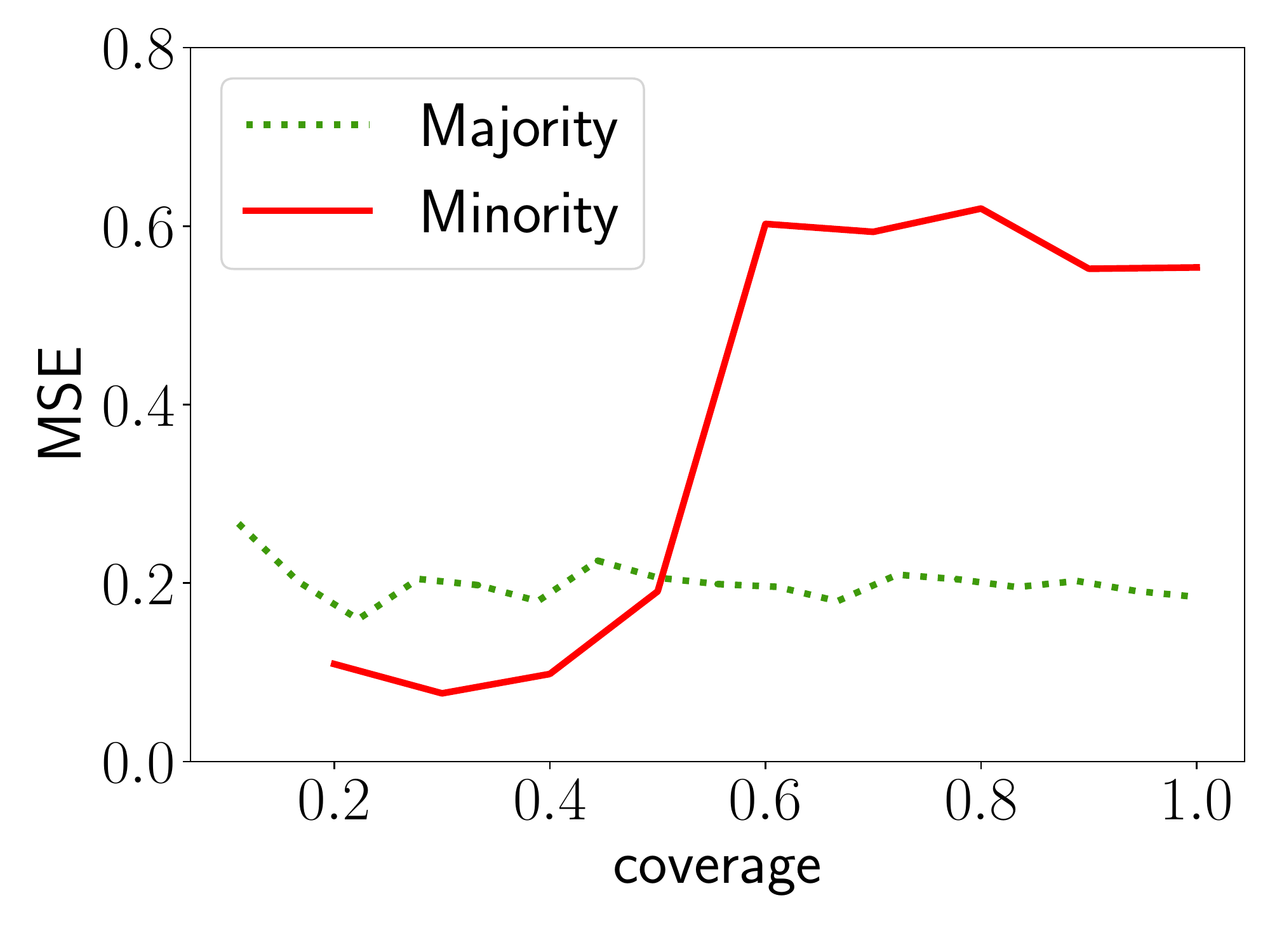}
  \caption{Performance of \texttt{Baseline 1} \\ for IHDP (treatment) dataset.}
  \label{fig:ihdp_treatment_baseline}
  \includegraphics[width=0.83\linewidth]{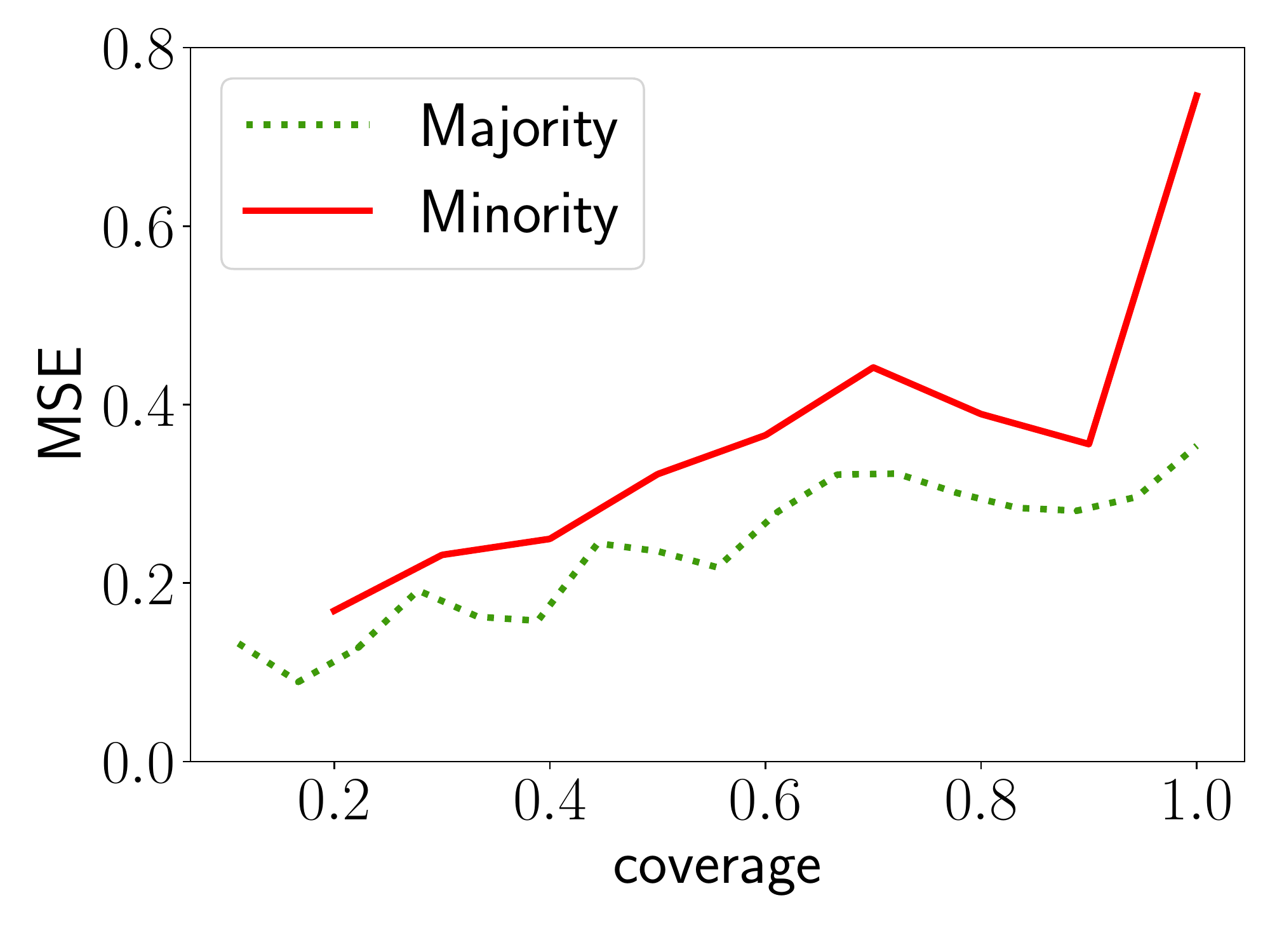}
 \caption{Performance of Algorithm \ref{alg:suff} \\ for IHDP (treatment) dataset.}
   \label{fig:ihdp_treatment_sufficiency}
\end{subfigure}
\caption{Subgroup MSE vs. coverage plots. Compared to baselines (top), our algorithms (bottom) (a) show a consistent trend of decreasing MSE with decrease in coverage for both subgroups, (b) achieve better minority MSE for fixed coverage, i.e., a smaller AUC for minority subgroup (red),
(c) achieve comparable majority MSE for fixed coverage, i.e., a comparable AUC for majority subgroup (green),
(d) reduce gap between the subgroup MSE curves, i.e., a smaller AUADC.}
\label{fig:main}
\vspace{-0.5em}
\end{figure*}
For the Insurance dataset, we see that subgroup MSE for \texttt{Baseline 2} increases with decrease in coverage for both majority and minority subgroups (Figure \ref{fig:insurance_baseline_two}). In contrast, the subgroup MSE for Algorithm \ref{alg:two_stage} tends to decrease with a decrease in coverage for both subgroups (Figure \ref{fig:insurance_suff_two}). For the Crime dataset, we see that the subgroup MSE for \texttt{Baseline 1} as well as Algorithm \ref{alg:suff} tends to decrease with a decrease in coverage for both subgroups (Figure \ref{fig:crime_baseline_one} and Figure \ref{fig:crime_suff_one}). However, for a particular coverage, Algorithm \ref{alg:suff} achieves a better MSE for the minority subgroup, a comparable MSE for the majority subgroup, and reduces the gap between the subgroup curves than \texttt{Baseline 1}. 

As hinted above, in addition to ensuring \textit{monotonic selective risk}, one may wish to (a) achieve a better performance focusing solely on the majority subgroup, (b) achieve a better performance focusing solely on the minority subgroup, and (c) reduce the gap between the minority subgroup MSE and the majority subgroup MSE across all thresholds.  These aspects could be quantitatively captured by looking at (a) the area under the majority MSE vs. coverage curve, i.e., AUC (D = 0), (b) the area under the minority MSE vs. coverage curve, i.e., AUC (D = 1), and (c) 
the area under the absolute difference of the subgroup MSE vs coverage curves (AUADC) \cite{franc2019discriminative,lee2021fair} respectively.
We provide these results in Table \ref{table:aucs_app} 
and observe that our algorithms outperform the baselines across datasets in terms of AUC (D = 1) and AUADC while being comparable in terms of AUC (D = 0).\\

\begin{table}[h]
\caption{Mean $\pm$ standard deviation (averaged across 5 runs) for AUC, AUC (D = 0), AUC (D = 1), and AUADC for all the algorithms and all the datasets. Smaller values are better.}
\label{table:aucs_app}
\centering
\begin{tabular}{p{16mm}p{20mm}p{26mm}p{26mm}p{26mm}p{26mm}}\\
\toprule
\textbf{Dataset} & \textbf{Algorithm} & \textbf{AUC}  & \textbf{AUC} (D = 0) & \textbf{AUC} (D = 1) & \textbf{AUADC} \\
\midrule
Insurance & \texttt{Baseline 1} & 0.0371 $\pm$ 0.0255 & 0.0342 $\pm$ 0.0197 & 0.0442 $\pm$ 0.0218 & 0.0069 $\pm$ 0.0050\\
 &  Algorithm \ref{alg:suff} & 0.0195 $\pm$ 0.0059 & 0.0207 $\pm$ 0.0050 & 0.0167 $\pm$ 0.0075 & 0.0052 $\pm$ 0.0031\\
 & \texttt{Baseline 2} & 0.0142 $\pm$ 0.0052 & 0.0129 $\pm$ 0.0042 & 0.0175 $\pm$ 0.0026 & 0.0079 $\pm$ 0.0041 \\
 &  Algorithm \ref{alg:two_stage} & 0.0099 $\pm$ 0.0006  & 0.0087 $\pm$ 0.0004 & 0.0120 $\pm$ 0.0011 & 0.0051 $\pm$ 0.0018 \\
\midrule
Crime &  \texttt{Baseline 1} & 0.0075 $\pm$ 0.0002 & 0.0040 $\pm$ 0.0011 & 0.0345 $\pm$ 0.0037 & 0.0309 $\pm$ 0.0008 \\
&  Algorithm \ref{alg:suff} & 0.0079 $\pm$ 0.0004 & 0.0045 $\pm$ 0.0013 & 0.0296 $\pm$ 0.0054 & 0.0298 $\pm$ 0.0011 \\
&  \texttt{Baseline 2} & 0.0101 $\pm$ 0.0019  & 0.0060 $\pm$ 0.0017 & 0.0442 $\pm$ 0.0022  & 0.0272 $\pm$ 0.0013 \\
&  Algorithm \ref{alg:two_stage} & 0.0117 $\pm$ 0.0017  & 0.0082 $\pm$ 0.0012 & 0.0375 $\pm$ 0.0019 & 0.0257 $\pm$ 0.0028 \\
\midrule
IHDP  & \texttt{Baseline 1} & 0.3053 $\pm$ 0.0823 & 0.2000 $\pm$ 0.0899 & 0.3509 $\pm$ 0.0811 & 0.2266 $\pm$ 0.0919 \\
(Treatment) &  Algorithm \ref{alg:suff} & 0.2435 $\pm$ 0.0823 & 0.2024 $\pm$ 0.0935 & 0.2849 $\pm$ 0.0767  & 0.2034 $\pm$ 0.0925\\
\midrule
IHDP & \texttt{Baseline 1} & 0.2041 $\pm$ 0.0138 & 0.2144 $\pm$ 0.0101 & 0.1983 $\pm$ 0.0125 & 0.0495 $\pm$ 0.0053 \\
(Control) &  Algorithm \ref{alg:suff} & 0.2017 $\pm$ 0.0170 & 0.2169 $\pm$ 0.0133 & 0.1877 $\pm$ 0.0129 & 0.0398 $\pm$ 0.0073 \\
\bottomrule
\end{tabular}
\end{table}

\noindent \textbf{Application to Causal Inference.}  We provide an application of our work to fair-treatment effect estimation. Treatment effect estimation has been studied under the paradigm of \textit{prediction with reject option} \cite{jesson2020identifying}, and we explore an additional dimension to this, i.e., the behavior across sub-populations. 
We follow the standard approach \citep{Kunzel2019} of viewing the dataset as two distinct datasets --- one corresponding to treatment group and the other corresponding to control group --- and apply our framework to these distinct datasets.
We focus only on Algorithm \ref{alg:suff} \cite{hill2011bayesian} and compare with \texttt{Baseline 1} since the simulated target for the IHDP dataset perfectly fits the conditional Gaussian assumption \cite{hill2011bayesian}.
For treatment group, we provide the subgroup MSE vs coverage behavior for \texttt{Baseline 1} in Figure \ref{fig:ihdp_treatment_baseline} and Algorithm \ref{alg:suff} in Figure \ref{fig:ihdp_treatment_sufficiency}. 
We provide corresponding curves for the control group in Appendix \ref{appendix:experiments}.
Compared to \texttt{Baseline 1} (Figure \ref{fig:ihdp_treatment_baseline}), Algorithm \ref{alg:suff} (Figure \ref{fig:ihdp_treatment_sufficiency}) (a) shows a general trend of decreasing MSE with decrease in coverage for both subgroups and (b) achieves a better minority MSE for a fixed coverage. 
Table \ref{table:aucs_app} suggests that our algorithm outperforms the baseline in terms of AUC (D = 1) and AUADC while being comparable in AUC (D = 0).
\section{Concluding Remarks}
We proposed a new fairness criterion, \textit{monotonic selective risk} for selective regression, which requires the performance of each subgroup to improve with a decrease in coverage. We provided two conditions for the feature representation (sufficiency and calibrated for mean and variance) under which the proposed fairness criterion is met. We presented algorithms to enforce these conditions and demonstrated mitigation of disparity in the performances across groups for three real-world datasets. 

Monotonic selective risk is one criteria for fairness in selective regression. 
Developing and investigating other such criteria and understanding their relationship with monotonic selective risk is an important question for future research. 

\subsection*{Acknowledgements}
This work was supported, in part, by the MIT-IBM Watson AI Lab, and its member companies Boston Scientific, Samsung, and Wells Fargo; and NSF under Grant No. CCF-1717610.

\bibliographystyle{abbrvnat}
\bibliography{references}

\onecolumn
\appendix
\section*{Appendix}
\textbf{Organization.}
The Appendix is organized as follows. In Appendix \ref{appendix:proof_theorem}, we provide the proof of Theorem \ref{thm:sufficiency}.
In Appendix \ref{appendix:algo_details}, we provide more details for imposing calibration for mean and variance via the residual-based neural network as well as provide the pseudo-code, i.e., Algorithm \ref{alg:two_stage}. 
In Appendix \ref{appendix:experiments}, we provide more experimental details and results.
\section{Proof of Theorem \ref{thm:sufficiency}}
\label{appendix:proof_theorem}
We restate the Theorem below and then provide a proof.
\thmSufficiency*
\begin{proof}
First, let us simplify the expression for $\MSE(f, g, \tau, d)$. We have
\begin{align}
    \MSE(f, g, \tau, d) & = \frac{\ev[(Y-f(\Phi(X)))^2 \cdot \Indicator(g(\Phi(X)) \le \tau) | D=d]}{\ev[\Indicator ( g(\Phi(X)) \le \tau) |  D=d]} \\
    & = \frac{\ev[(Y-f(\Phi(X)))^2 \cdot \Indicator(g(\Phi(X)) \le \tau) | D=d]}{\pr(g(\Phi(X)) \le \tau |  D=d)} \label{eq:mse_simple0}
\end{align}
Let $c_{\tau, d} \triangleq \pr(g(\Phi(X)) \le \tau |  D=d)$. Using \eqref{eq:mse_simple0}, we have
\begin{align}
    \MSE(f, g, \tau, d) - \MSE(f, g, \tau', d) &  = \frac{1}{c_{\tau, d}} \ev[(Y-f(\Phi(X)))^2 \cdot \Indicator(g(\Phi(X)) \le \tau) | D=d] \\
    & - \frac{1}{c_{\tau', d}} \ev[(Y-f(\Phi(X)))^2 \cdot \Indicator(g(\Phi(X)) \le \tau') | D=d] \\
    & = \bigg(\frac{1}{c_{\tau, d}} - \frac{1}{c_{\tau', d}}\bigg) \ev[(Y-f(\Phi(X)))^2 \cdot \Indicator(g(\Phi(X)) \le \tau) | D=d] \label{eq:mse_simple1}\\
    & - \frac{1}{c_{\tau', d}} \ev[(Y-f(\Phi(X)))^2 \cdot \Indicator(\tau < g(\Phi(X)) \le \tau') | D=d] \label{eq:mse_simple2}
\end{align}
Now, let us upper bound $\ev[(Y-f(\Phi(X)))^2 \cdot \Indicator(g(\Phi(X)) \le \tau) | D=d]$. We have
\begin{align}
    & \ev[(Y-f(\Phi(X)))^2 \cdot \Indicator(g(\Phi(X)) \le \tau) | D=d] \label{eq:proof1}\\
    & \qquad \stackrel{(a)}{=} \ev_{\Phi(X)|D}[\Indicator(g(\Phi(X)) \le \tau) \cdot \ev_{Y|\Phi(X), D}[(Y-f(\Phi(X)))^2 | \Phi(X), D=d] | D = d] \label{eq:proof2}\\
    & \qquad \stackrel{(b)}{=} \ev_{\Phi(X)|D}[\Indicator(g(\Phi(X)) \le \tau) \cdot \ev_{Y|\Phi(X), D}[(Y-\ev[Y | \Phi(X)])^2 | \Phi(X), D=d] | D = d] \label{eq:proof3}\\
    & \qquad \stackrel{(c)}{=} \ev_{\Phi(X)|D}[\Indicator(g(\Phi(X)) \le \tau) \cdot \ev_{Y|\Phi(X), D}[(Y-\ev[Y | \Phi(X), D = d])^2 | \Phi(X), D=d] | D = d] \label{eq:proof4}\\
    & \qquad \stackrel{(d)}{=} \ev_{\Phi(X)|D}[\Indicator(g(\Phi(X)) \le \tau) \cdot \Var[Y | \Phi(X), D = d] | D = d ] \label{eq:proof5}\\
    & \qquad \stackrel{(e)}{=} \ev_{\Phi(X)|D}[\Indicator(g(\Phi(X)) \le \tau) \cdot \Var[Y | \Phi(X)] | D = d ] \label{eq:proof6}\\
    & \qquad \stackrel{(f)}{=} \ev_{\Phi(X)|D}[\Indicator(g(\Phi(X)) \le \tau) \cdot g(\Phi(X)) | D = d] \label{eq:proof7}\\
    & \qquad \leq \tau \ev_{\Phi(X)|D}[\Indicator(g(\Phi(X)) \le  \tau) | D = d] 
    =  \tau \pr(g(\Phi(X)) \le \tau |  D=d) 
    = \tau c_{\tau, d} \label{eq:proof9}
\end{align}
where $(a)$ follows from the definition of conditional expectation and because $\Indicator(g(\Phi(X)) \le \tau)$ is a constant conditioned on $\Phi(X)$, $(b)$ follows because $f(\Phi(X)) = \ev[Y | \Phi(X)]$, $(c)$ follows because $Y\!\perp\!D | \Phi(X) \implies \ev[Y | \Phi(X)] = \ev[Y | \Phi(X), D = d]$, $(d)$ follows from the definition of conditonal variance, $(e)$ follows because $Y\!\perp\!D | \Phi(X) \implies \Var[Y | \Phi(X)] = \Var[Y | \Phi(X), D = d]$, and $(f)$ follows because $g(\Phi(X)) = \Var[Y | \Phi(X)]$.

Now, let us lower bound $\ev[(Y-f(\Phi(X)))^2 \cdot \Indicator(\tau < g(\Phi(X)) \le \tau') | D=d]$. 
Similar to above, we have
\begin{align}
    & \ev[(Y-f(\Phi(X)))^2 \cdot \Indicator(\tau < g(\Phi(X)) \le \tau') | D=d] \\
    & = \ev_{\Phi(X)|D}[\Indicator(\tau < g(\Phi(X)) \le \tau') \cdot g(\Phi(X)) | D = d] \label{eq:proof10}\\
    & > \tau \ev_{\Phi(X)|D}[\Indicator(\tau < g(\Phi(X)) \le  \tau') | D = d] \label{eq:proof11}\\
    & = \tau (\pr(g(\Phi(X)) \le \tau' |  D=d) - \pr(g(\Phi(X)) \le \tau |  D=d)) \label{eq:proof12}\\
    & = \tau (c_{\tau', d} - c_{\tau, d}) \label{eq:proof13}
\end{align}
Plugging in \eqref{eq:proof9} and \eqref{eq:proof13} in \eqref{eq:mse_simple1} and \eqref{eq:mse_simple2}, we have
\begin{align}
    \MSE(f, g, \tau, d) - \MSE(f, g, \tau', d) < \bigg(\frac{1}{c_{\tau, d}} - \frac{1}{c_{\tau', d}}\bigg) \cdot \tau c_{\tau, d} - \frac{1}{c_{\tau', d}} \tau \cdot (c_{\tau', d} - c_{\tau, d}) = 0.
\end{align}
\end{proof}
\section{More Details to Impose Calibration for Mean and Variance}
\label{appendix:algo_details}
In this section, we provide more details for imposing calibration for mean and variance via the residual-based neural network as well as provide the pseudo-code. As described in Section~\ref{subsec:residue}, in a residual-based neural network, once the mean-prediction network $f$ is trained, the residuals i.e., $r_i = (y_i - f (\Phi_1(x_i); \theta_f))^2$ are used to train the variance-prediction network $g$ by minimizing:
\begin{equation}
  L_{S2} (\Phi_2,\theta_g) \triangleq \sum_{i=1}^n (r_i - g (\Phi_2(x_i); \theta_g) )^2.
\end{equation}
The feature representation $\Phi_2$ is parameterized by $\theta_{\Phi_2}$ and the variance-prediction network is supposed to learn the parameters $\theta_{\Phi_2}$ and $\theta_g$.

To impose calibration under variance, we construct a contrastive loss similar to \eqref{equ:MSE_contrastive}. Then, the regularizer can be written as
\begin{equation}\label{equ:RS2}
    L_{R2}( \Phi_2) \triangleq \sum_{i=1}^n\Big(\big(r_i - g (\Phi_2(x_i); w_g^{(\widetilde{d}_i)}) \big)^2   - \big(r_i - g (\Phi_2(x_i); w_g^{({d}_i)}) \big)^2\Big),
\end{equation}
where  $\widetilde{d}_i$ are drawn i.i.d. from the marginal distribution $P_D$,
and for $d \in \cD$,
\begin{equation}
    w_g^{(d)} = \argmin_{w}  \sum_{i:\ d_i=d} \big(r_i - g (\Phi_2(x_i); w) \big)^2. 
\end{equation}
Summarizing, the overall objective for variance-prediction is
\begin{equation}
    \min_{\theta_g,\Phi_2} L_{S2} (\Phi_2,\theta_g) + \lambda_2 L_{R2}(\Phi_2).
\end{equation}
We provide a pseudo-code in Algorithm \ref{alg:two_stage} where 
\begin{align}
    L_{d1}(w_f) & \triangleq \sum_{i:\ d_i=d} \big(y_i - f (\Phi_1(x_i); w_f) \big)^2, \\
    L_{d2}(w_g) & \triangleq \sum_{i:\ d_i=d} \big(r_i - g (\Phi_2(x_i); w_g) \big)^2. 
\end{align}
\begin{algorithm}[tb]
   \caption{Residual-based neural network with calibration-based regularizer}
\label{alg:two_stage}
\begin{algorithmic}
\STATE {\bfseries Input:} training samples $\{(x_i,y_i,d_i)\}_{i=1}^{n}$, regularizers $\lambda_1$ and $\lambda_2$
 \STATE {\bfseries Draw:} $\{\widetilde{d}_1,\dots,\widetilde{d}_n\}$ drawn i.i.d. from $\hat{\pr}_D$
 \STATE {\bfseries Initialize:} $\theta_f$, $\theta_g$, $\theta_{\Phi_1}$, $\theta_{\Phi_2}$, $w_f^{(d)}$, and $w_g^{(d)}$ with pre-trained models 
\STATE {\bfseries Initialize:} $n_d = $ number of samples in group $d$ $\forall d \in \cD$
\FOR{each training iteration}
    \FOR{each batch}
    \FOR{$d=1,\dots,|\cD|$} 
        \STATE {
        $w_f^{(d)} \leftarrow w_f^{(d)} - \frac{1}{n_d} \eta_{f} \nabla_{w_f} L_{d1}(w_f)$ \textBlue{\# update subgroup-specific mean predictor}}
        \ENDFOR
    \ENDFOR
    \FOR{each batch}
    \STATE {$\theta_{\Phi_1} \leftarrow \theta_{\Phi_1} - \frac{1}{n} \eta \nabla_{\theta_{\Phi_1}} (L_{S1}(\Phi_1, \theta_f)+\lambda_1 L_{R1}(\Phi_1))$ \textBlue{\# update feature extractor for mean predictor}}
    \STATE {$\theta_f \leftarrow \theta_f - \frac{1}{n} \eta \nabla_{\theta_f} L_{S1}(\Phi_1, \theta_f)$ \textBlue{\# update mean predictor}}
    \ENDFOR
\ENDFOR    
    \STATE Compute the residuals: $r_i = (y_i - f (\Phi_1(x_i); \theta_f))^2$ 
\FOR{each training iteration}
    \FOR{each batch}
    \FOR{$d=1,\dots,|\cD|$}
        \STATE {
        $w_g^{(d)} \leftarrow w_g^{(d)} - \frac{1}{n_d} \eta_{g} \nabla_{w_g} L_{d2}(w_g)$ \textBlue{\# update subgroup-specific variance predictor}}
        \ENDFOR
    \ENDFOR
    \FOR{each batch}
    \STATE {$\theta_{\Phi_2} \leftarrow \theta_{\Phi_2} - \frac{1}{n} \eta \nabla_{\theta_{\Phi_2}} (L_{S2}(\Phi_2, \theta_g)+\lambda_2 L_{R2}(\Phi_2))$ \textBlue{\# update feature extractor for variance predictor}} 
    \STATE {$\theta_g \leftarrow \theta_g - \frac{1}{n} \eta \nabla_{\theta_g} L_{S2}(\Phi_2, \theta_g)$ \textBlue{\# update variance predictor}} 
    \ENDFOR
\ENDFOR 
\end{algorithmic}
\end{algorithm}
\section{Additional experimental results}
\label{appendix:experiments}
In this section, we provide more experimental details and results. We start by providing those experimental details that remain the same across the datasets. Next, we provide details that are specific to each dataset, i.e., Insurance, Crime, and IHDP. Finally, we provide more experimental results and some discussion.

\subsection{Experimental Details}
In all of our experiments, we use two-layer neural networks. For all hidden layers, we use the \textit{selu} activation function. For the output layer, we use a non-linear activation function only for the variance-prediction network associated with Algorithm \ref{alg:two_stage} to ensure that the predictions of  variance are non-negative. In particular, we use the \textit{soft-plus} activation function for the variance-prediction network associated with Algorithm \ref{alg:two_stage}. In our implementation of Algorithm \ref{alg:suff}, we predict log-variance instead of variance and therefore stick to \textit{linear} activation function.

We train all our neural networks with the Adam optimizer, a batch size of 128, and over 40 epochs. We use a step learning rate scheduler with an initial learning rate of $5\times10^{-3}$ and decay it by a factor of half after every two epochs. As described in Section \ref{sec:expts}, we set the regularizer $\lambda = 1$ for all our experiments after observing that the performance of our algorithm is agnostic to the choice of $\lambda$ as long as it is in a reasonable range, i.e., $\lambda \in [0.5, 3]$.

\subsection{Insurance}  The Insurance dataset\footnote[5]{{\footnotesize https://github.com/stedy/Machine-Learning-with-R-datasets/blob/master/insurance.csv}} is a semi-synthetic dataset that was created using demographic statistics from the U.S. Census Bureau and approximately reflects real-world conditions. A few features in this dataset include the BMI, number of children, age, etc. We remove the sensitive attribute from the set of input features to preprocess the data. To reflect the real-world scenarios where the accuracy disparity is significant due to the small and imbalanced dataset, similar to \cite{chi2021understanding}, we randomly drop 50\% of examples with $D\! =\! 1$. Further, we normalize the output annual medical expenses and the features: age and BMI. We use 3 neurons in the hidden layer for this dataset.

\subsection{Communities and Crime} The Communities and Crime dataset\footnote[6]{{\footnotesize https://archive.ics.uci.edu/ml/datasets/communities+and+crime}} contains socio-economic information of communities in U.S. and their crime rates. 
A few features in this dataset include population for community, mean people per household, percentage of the population that is white, per capita income, number of police cars, etc. We remove the non-predictive attributes and the sensitive attribute from the set of input features during preprocessing. All attributes in the dataset have been curated and normalized to [0, 1], so we don't perform any additional normalization. Finally, we replace the missing values with the mean values of the corresponding attributes similar to \cite{chi2021understanding}. We use 50 neurons in the hidden layer for this dataset.


\subsection{IHDP} The IHDP dataset\footnote[7]{{\footnotesize https://github.com/AMLab-Amsterdam/CEVAE/tree/master/datasets/IHDP}} is generated based on a randomized control trial targeting low-birth-weight, premature infants. The 25 features measure various aspects about the children and their mothers, e.g., child's birth weight, child's gender, mother's age, mother's education, an indicator for maternal alcohol consumption during pregnancy, etc. We remove the sensitive attribute from the set of input features to preprocess the data. Further, we normalize the output cognitive test score and the features: child's birth weight, child's head circumference at birth, number of weeks pre-term that the child was born, birth order, neo-natal health index, and mom's age when she gave birth to the child. Following the norm in the causal inference community, a biased subset of the treated group is removed to create an imbalance leaving 139 samples in the treatment group and 608 samples in the control group. The target is typically simulated using the setting ``A'' of the NPCI package \citep{Dorie2016}. We use 20 neurons in the hidden layer for this dataset.

\subsection{Overall MSE vs. coverage curves}
In Section \ref{sec:expts}, we compared different algorithms in terms of how well they performed selective regression (i.e., with no consideration of fairness) by looking at the area under MSE vs. coverage curve (AUC). Here, we provide the MSE vs. coverage curves for the Insurance dataset in Figure \ref{fig:insurance_joint}, the Crime dataset in Figure \ref{fig:crime_joint}, the IHDP (control) dataset in Figure \ref{fig:ihdp_c_joint}, and the IHDP (treatment) dataset in Figure \ref{fig:ihdp_t_joint}. 

For the Insurance dataset, we see that Algorithm \ref{alg:suff} and Algorithm \ref{alg:two_stage} perform selective regression better than \texttt{Baseline 1} and \texttt{Baseline 2}, respectively. This is also evident via the values of AUC in Table \ref{table:aucs_app}. For the Crime dataset, the MSE decreases with decrease in coverage as expected for all four algorithms. Further, the performances of \texttt{Baseline 1} and \texttt{Baseline 2} are slightly better than that of Algorithm \ref{alg:suff} and Algorithm \ref{alg:two_stage} respectively. This is also evident via the values of AUC in Table \ref{table:aucs_app}. for the IHDP (control) dataset and IHDP (treatment) dataset, we see that Algorithm \ref{alg:suff} performs selective regression better than \texttt{Baseline 1}. This is also evident via the values of AUC in Table \ref{table:aucs_app}.
\begin{figure*}[t]
\centering
\begin{subfigure}{.4\textwidth}
\includegraphics[width=0.8\textwidth]{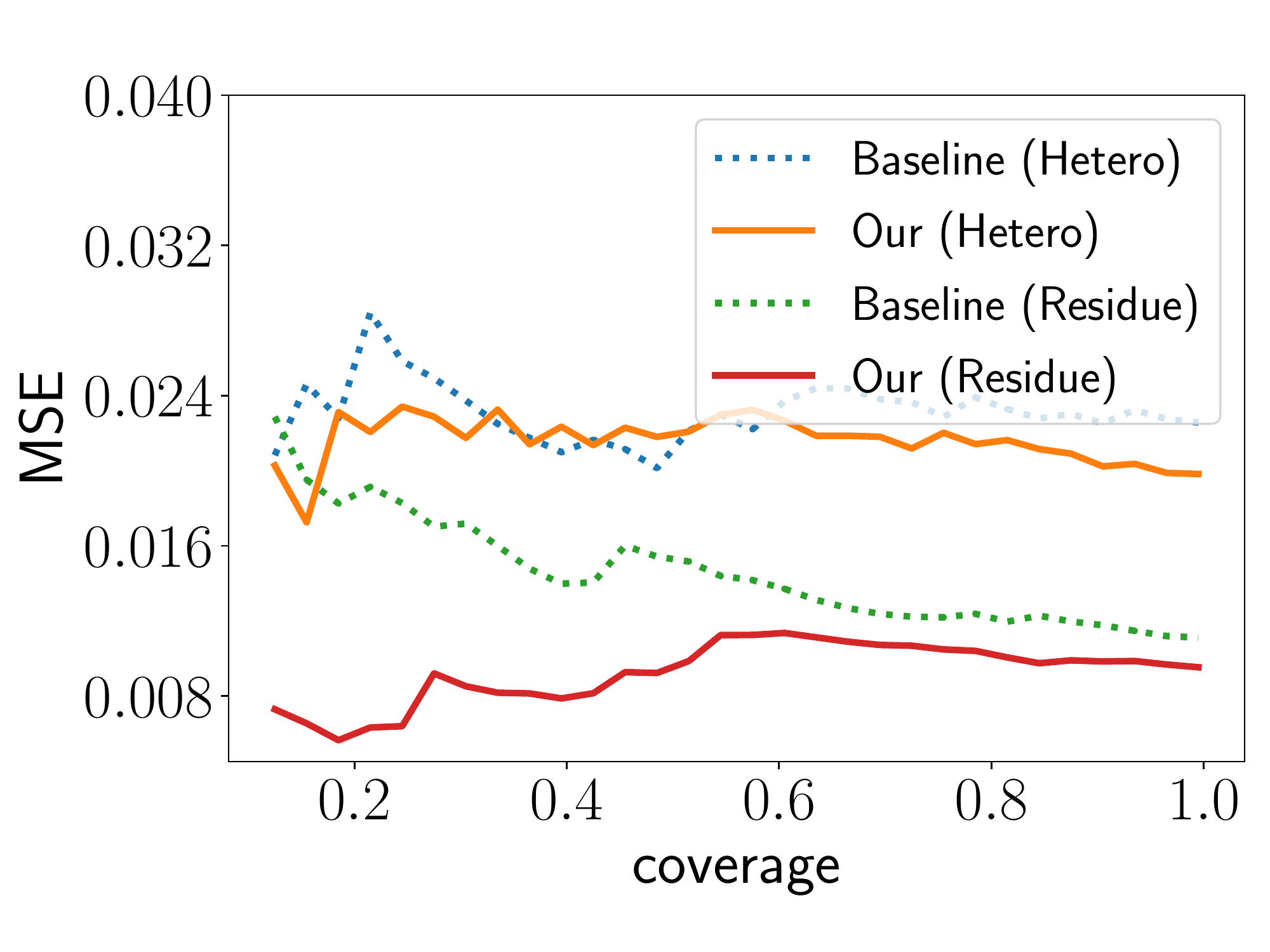}
 \caption{Insurance}
  \label{fig:insurance_joint}
\end{subfigure}
\begin{subfigure}{.4\textwidth}
\includegraphics[width=0.8\textwidth]{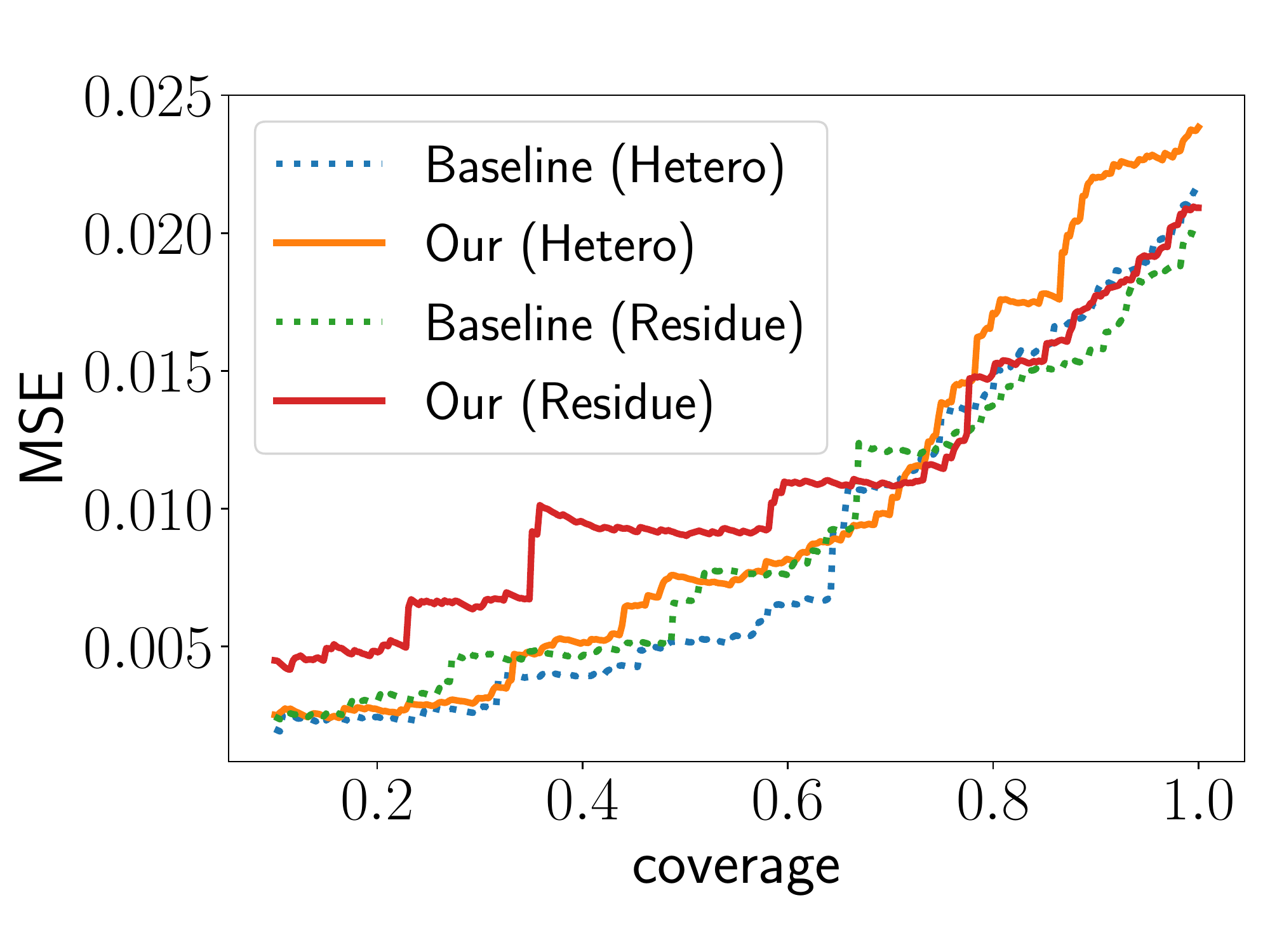}
 \caption{Crime}
  \label{fig:crime_joint}
\end{subfigure}\hfill
\begin{subfigure}{.4\textwidth}
\includegraphics[width=0.8\textwidth]{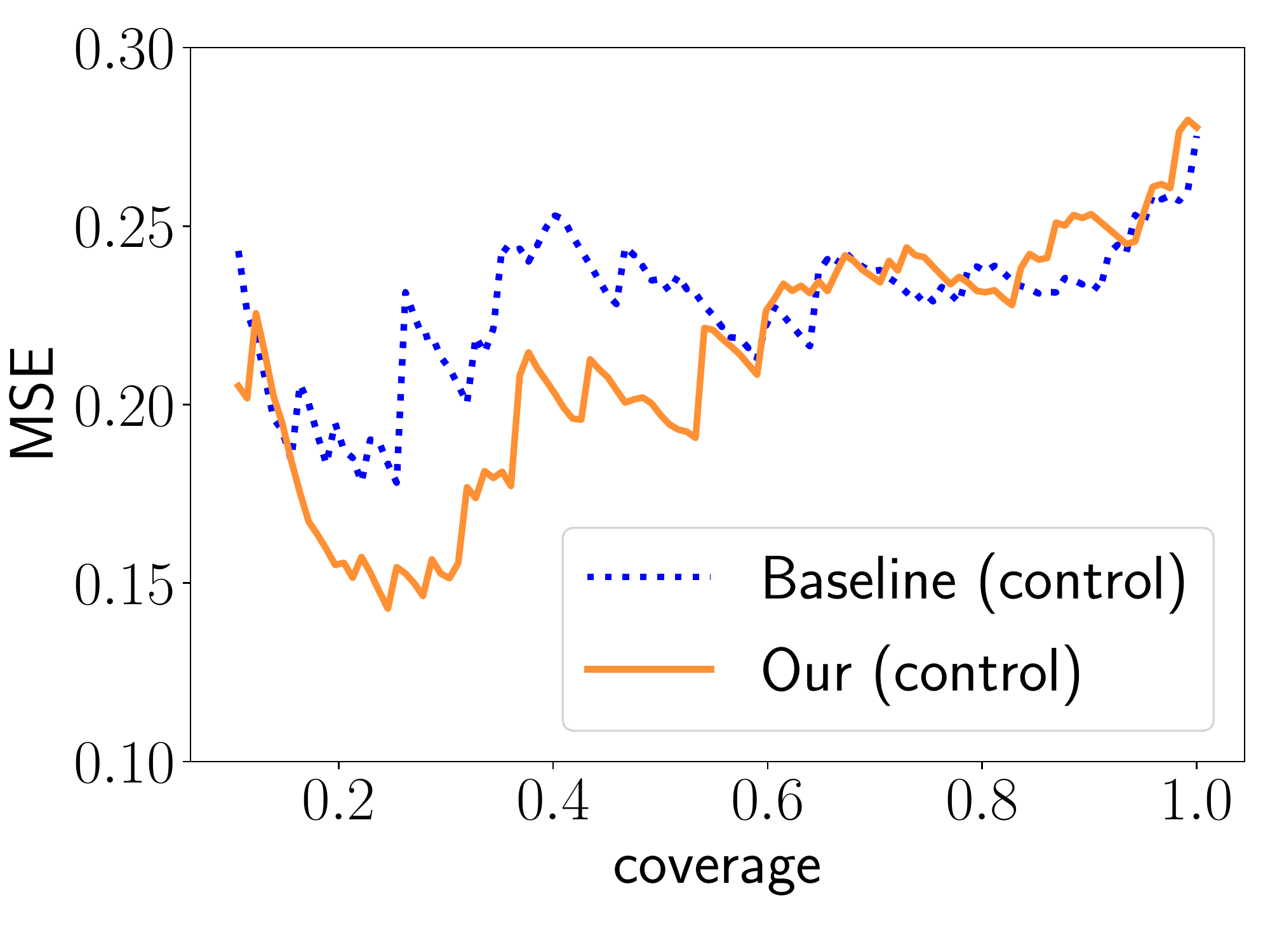}
 \caption{IHDP (control)}
  \label{fig:ihdp_c_joint}
\end{subfigure}%
\begin{subfigure}{.4\textwidth}
\includegraphics[width=0.8\textwidth]{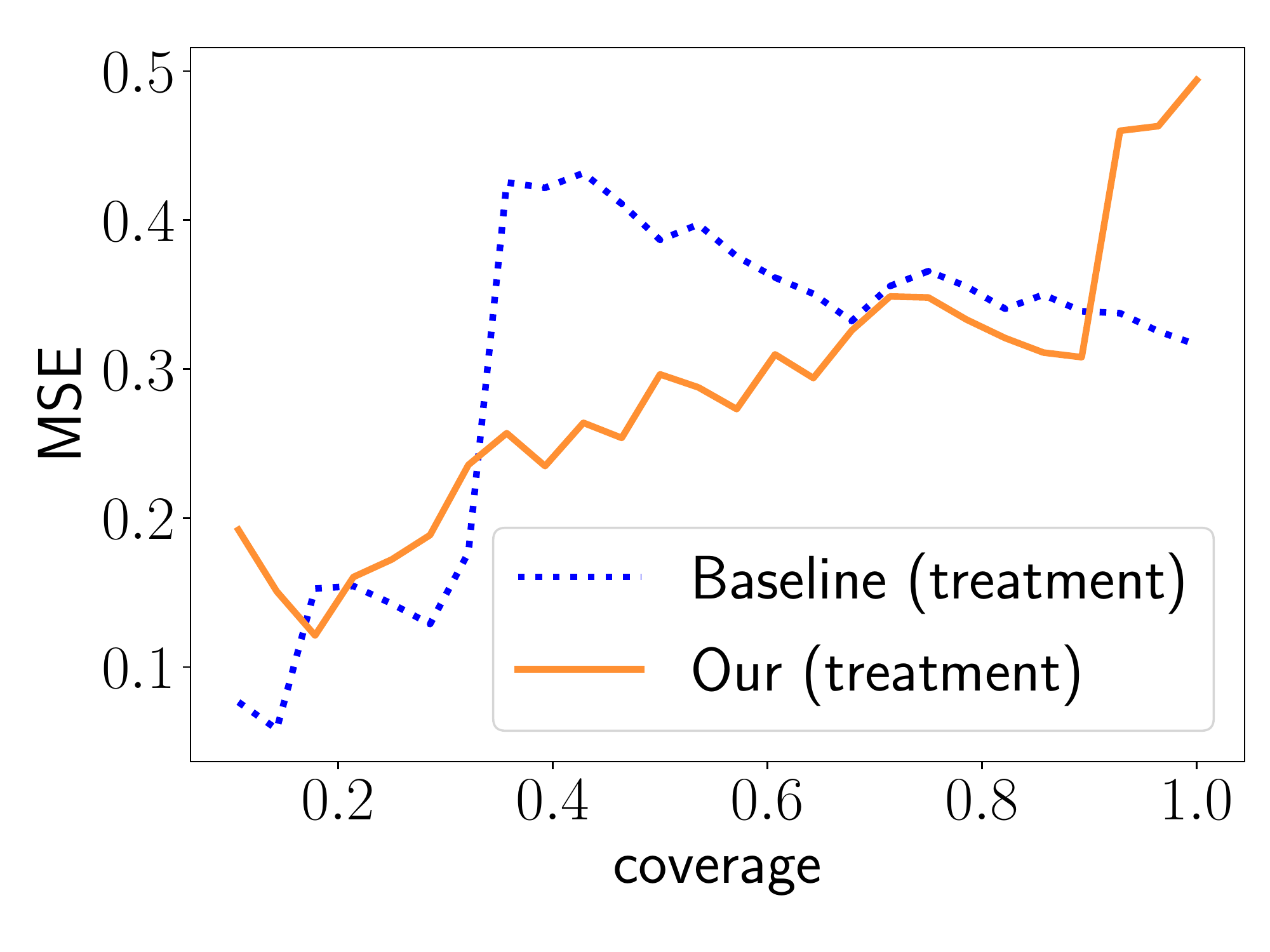}
 \caption{IHDP (treatment)}
  \label{fig:ihdp_t_joint}
\end{subfigure}%
\caption{MSE vs. coverage for various datasets}
\label{fig:joint}
\end{figure*}
\begin{figure*}[h]
\centering
\begin{subfigure}{.33\textwidth}
\centering
  \captionsetup{justification=centering}
  \includegraphics[width=0.95\linewidth]{figures/insurance_baseline_one_stage.pdf}
  \caption{Performance of \texttt{Baseline 1}\\ for the Insurance dataset.}
  \label{fig:insurance_baseline_one}
  \includegraphics[width=0.95\linewidth]{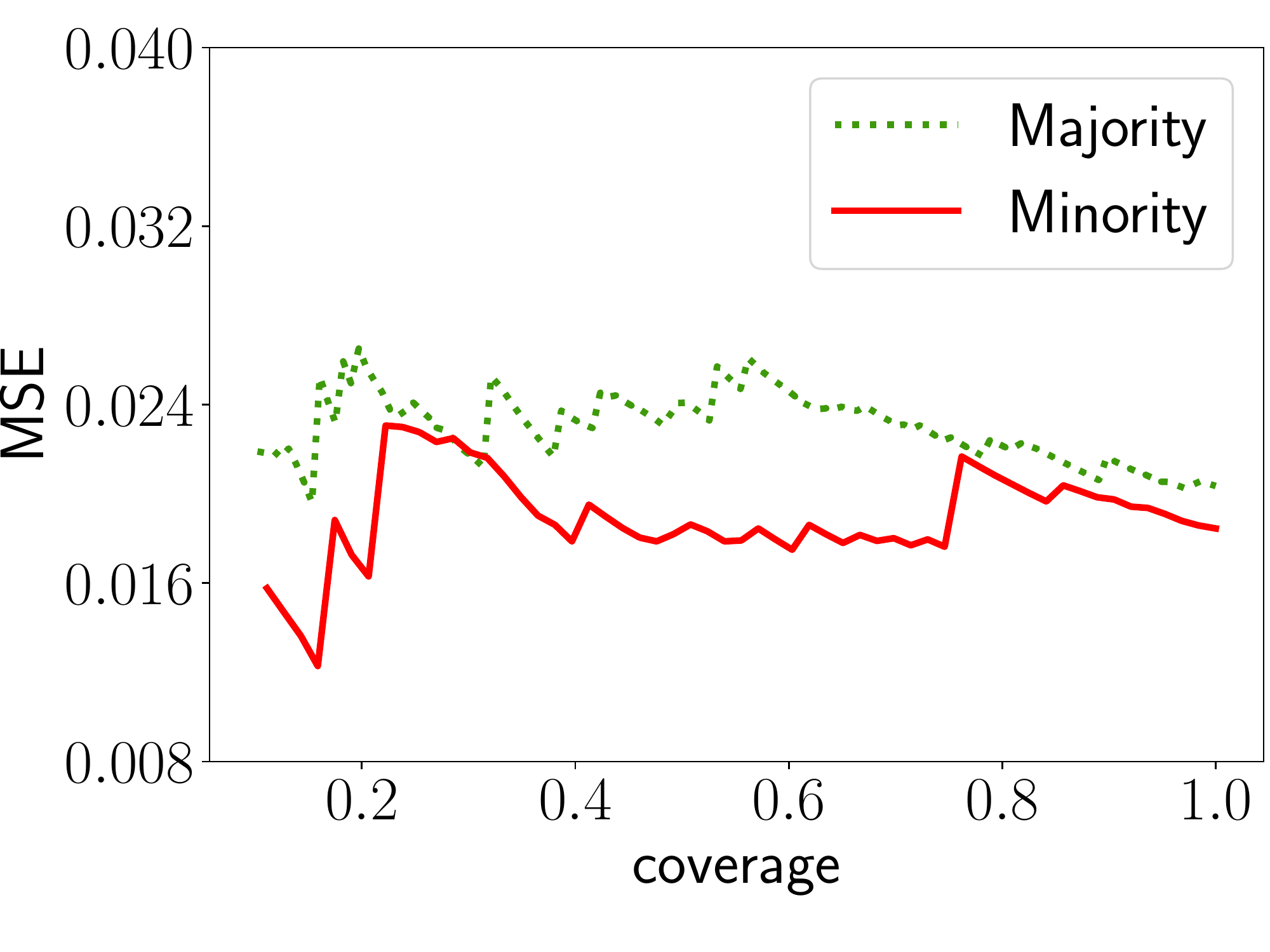}
 \caption{Performance of Algorithm \ref{alg:suff} \\for the Insurance dataset.}
  \label{fig:insurance_suff_one}
\end{subfigure}\hfill
\begin{subfigure}{.33\textwidth}
\centering
  \captionsetup{justification=centering}
  \includegraphics[width=0.95\linewidth]{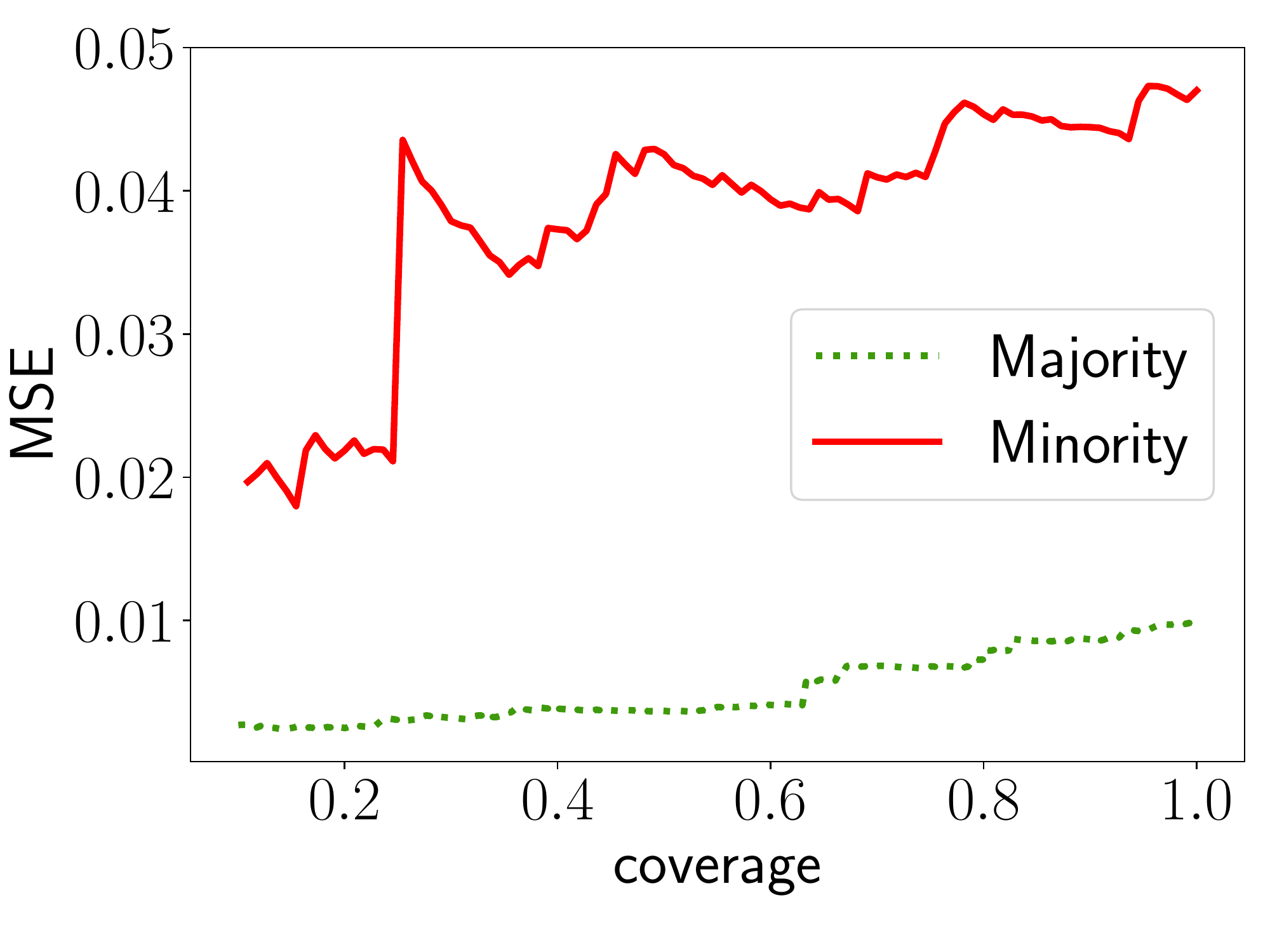}
  \caption{Performance of \texttt{Baseline 2} \\ for the Crime dataset.}
  \label{fig:crime_baseline_two}
  \includegraphics[width=0.95\linewidth]{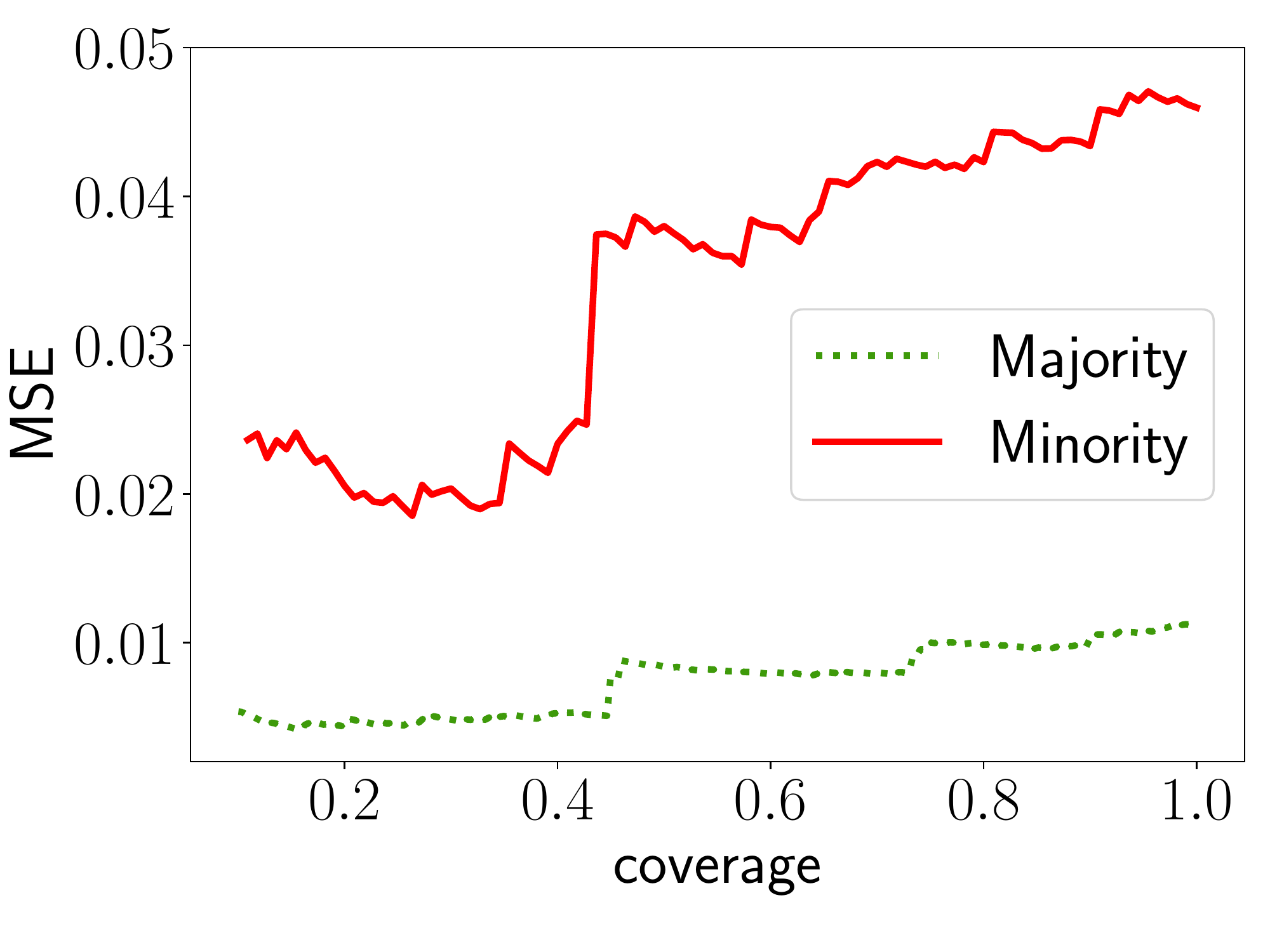}
  \caption{Performance of Algorithm \ref{alg:two_stage} \\ for the Crime dataset.}
  \label{fig:crime_suff_two}
\end{subfigure}\hfill
\begin{subfigure}{.34\textwidth}
\centering
  \captionsetup{justification=centering}
  \includegraphics[width=0.93\linewidth]{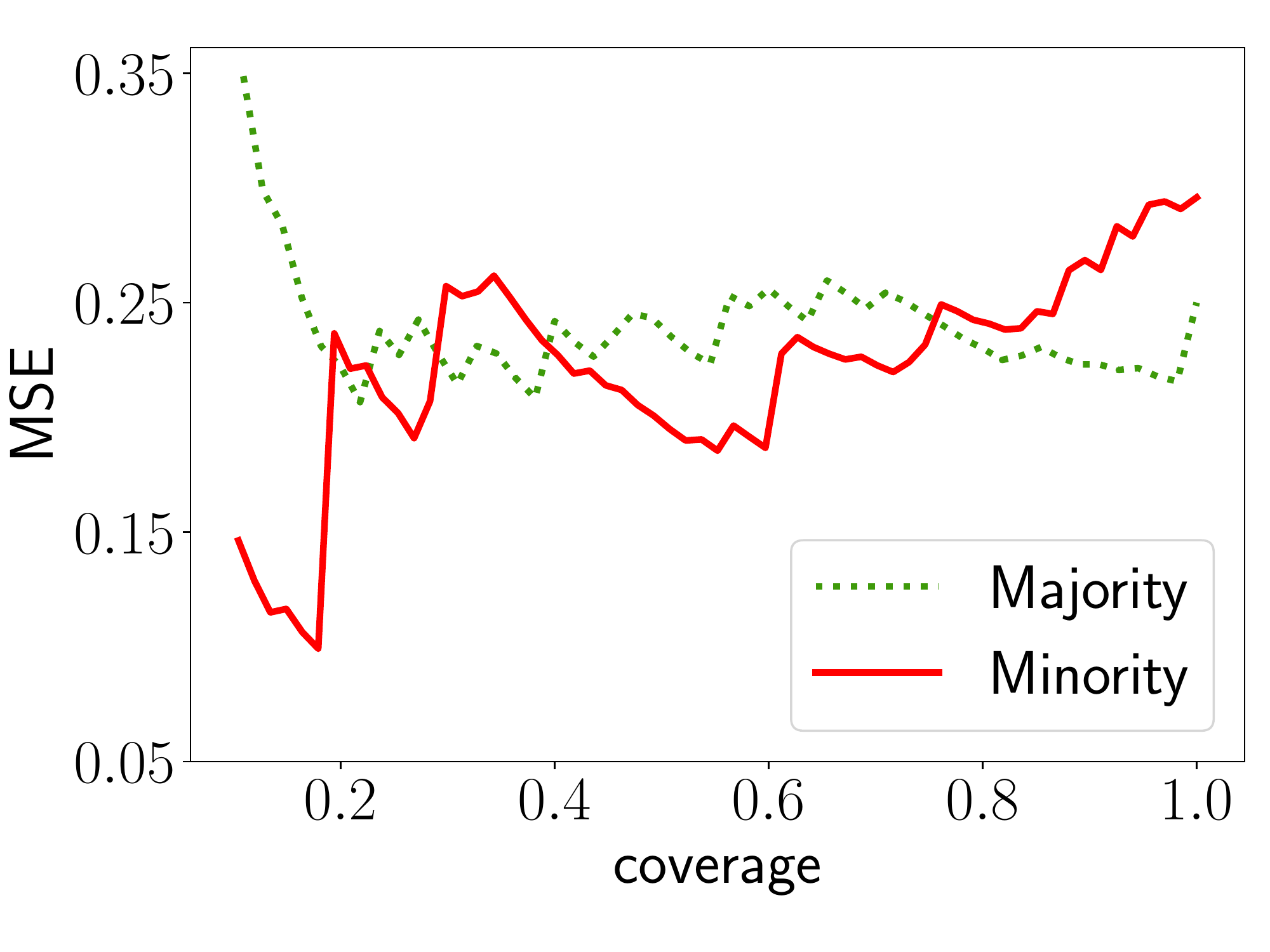}
  \caption{Performance of \texttt{Baseline 1} \\ for the IHDP (control) dataset.}
  \label{fig:ihdp_control_baseline}
  \includegraphics[width=0.93\linewidth]{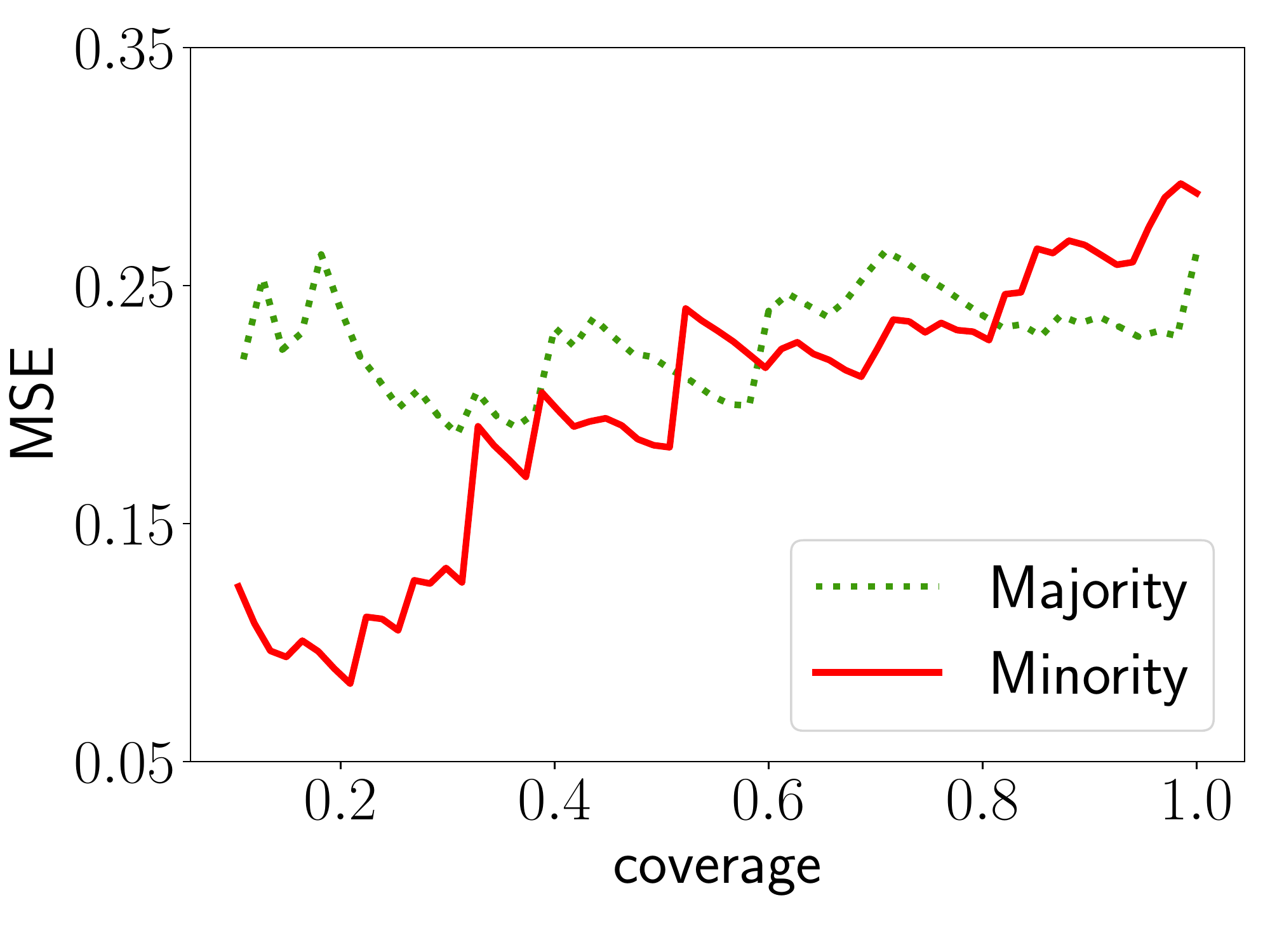}
 \caption{Performance of Algorithm \ref{alg:suff} \\ for the IHDP (control) dataset.}
   \label{fig:ihdp_control_sufficiency}
\end{subfigure}
\caption{Subgroup MSE vs. coverage plots for various datasets comparing baselines (top) and our algorithms (bottom).}
\label{fig:main_2}
\end{figure*}
\subsection{Group-specific MSE vs. coverage curves.}
In Section \ref{sec:expts}, we compared the different algorithms in terms of how well they perform fair selective regression by looking at the subgroup MSE vs. coverage curves in addition to AUC, AUC (D = 0), AUC (D = 1), and AUADC. 

More specifically, we looked at  \texttt{Baseline 2} and Algorithm \ref{alg:two_stage} for the Insurance dataset, \texttt{Baseline 1} and Algorithm \ref{alg:suff} for the Crime dataset, and \texttt{Baseline 1} and Algorithm \ref{alg:suff} for the IHDP (treatment) dataset.
Here, we show the subgroup MSE vs. coverage curves for  \texttt{Baseline 1} and Algorithm \ref{alg:suff} for the Insurance dataset (Figure \ref{fig:insurance_baseline_one} and \ref{fig:insurance_suff_one}), \texttt{Baseline 2} and Algorithm \ref{alg:two_stage} for the Crime dataset (Figure \ref{fig:crime_baseline_two} and \ref{fig:crime_suff_two}), and \texttt{Baseline 1} and Algorithm \ref{alg:suff} for the IHDP (control) dataset (Figure \ref{fig:ihdp_control_baseline} and \ref{fig:ihdp_control_sufficiency}).

For the Insurance dataset, we see that subgroup MSE for the minority subgroup increases with decrease in coverage for \texttt{Baseline 1} (Figure \ref{fig:insurance_baseline_one}) as already described in Section \ref{subsec:biases_sr}. In contrast, the subgroup MSE for Algorithm \ref{alg:suff} does not increase with  decrease in coverage and stays relatively flat (Figure \ref{fig:insurance_suff_one}). Further, for a particular coverage, Algorithm \ref{alg:suff} achieves a better MSE for the minority subgroup, a comparable MSE for the majority subgroup, and reduces the gap between the subgroup curves than \texttt{Baseline 1} (see the values of AUC (D = 0), AUC (D = 1), and AUADC in Table \ref{table:aucs_app}).

For the Crime dataset, we see that the subgroup MSE for \texttt{Baseline 2} as well as Algorithm \ref{alg:two_stage} tends to decrease with a decrease in coverage for both subgroups (Figure \ref{fig:crime_baseline_two} and Figure \ref{fig:crime_suff_two}). However, for a particular coverage, Algorithm \ref{alg:two_stage} achieves a better MSE for the minority subgroup, a comparable MSE for the majority subgroup, and reduces the gap between the subgroup curves than \texttt{Baseline 2} (see the values of AUC (D = 0), AUC (D = 1), and AUADC in Table \ref{table:aucs_app}).

For the IHDP (control) dataset, the subgroup MSE for \texttt{Baseline 1} increases with decrease in coverage (Figure \ref{fig:ihdp_control_baseline}). In contrast, the subgroup MSE for Algorithm \ref{alg:suff} decreases with decrease in coverage (Figure \ref{fig:ihdp_control_sufficiency}). Additionally, Algorithm \ref{alg:suff} achieves a comparable MSE for the majority subgroup, and reduces the gap between the subgroup curves than \texttt{Baseline 1} (see the values of AUC (D = 0), AUC (D = 1), and AUADC in Table \ref{table:aucs_app}).

\subsection{Group-specific MSE vs. coverage curves with three subgroups}
\label{subsec:three_subgroups}
In all of our experiments so far, we focused on the scenario where $D$ was binary. As we now demonstrate, our approach works equally well when $D$ can take more than two values. 

More concretely, we use the Crime dataset to obtain 3 subgroups i.e., the sensitive attribute can take three values. This is possible since, race, the sensitive attribute, is reported as the population percentage of the black in the Crime dataset. We assign (a) $D = 2$ if the population percentage of the black is more than or equal to 20, (b) $D = 1$ if the population percentage of the black is less than 20 but more than or equal to 1, and (c) $D = 0$ otherwise. We show the performance of \texttt{Baseline 1} and Algorithm \ref{alg:suff} in Figure \ref{fig:crime_baseline_3} and \ref{fig:crime_suff_3} respectively. As expected, Algorithm \ref{alg:suff} ensures monotonic selective risk unlike \texttt{Baseline 1} (see $D = 2$).
\begin{figure*}[h]
\centering
\begin{subfigure}{.4\textwidth}
\includegraphics[width=0.8\textwidth]{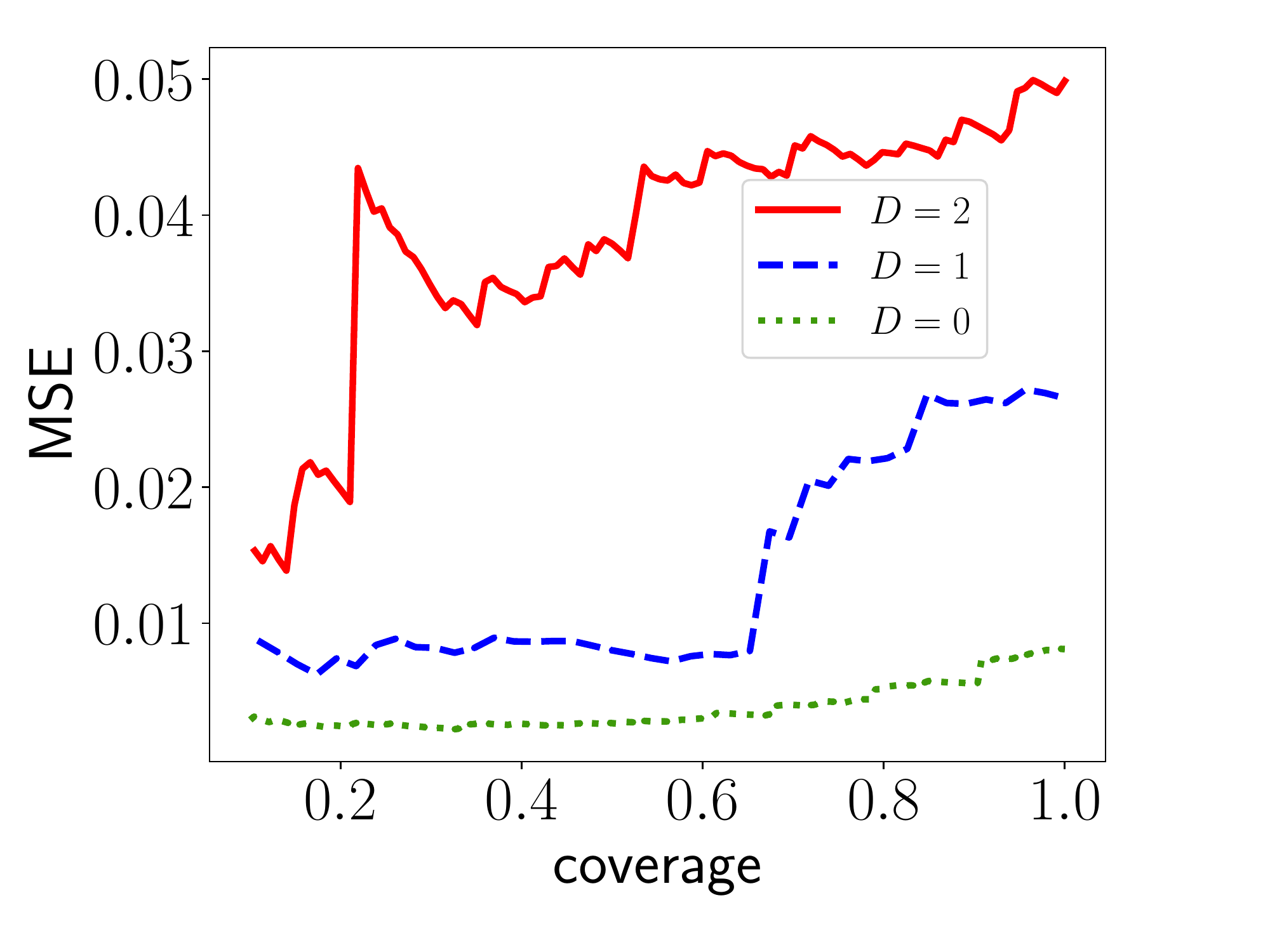}
 \caption{Performance of \texttt{Baseline 1}.}
  \label{fig:crime_baseline_3}
\end{subfigure}
\centering
\begin{subfigure}{.4\textwidth}
\includegraphics[width=0.8\textwidth]{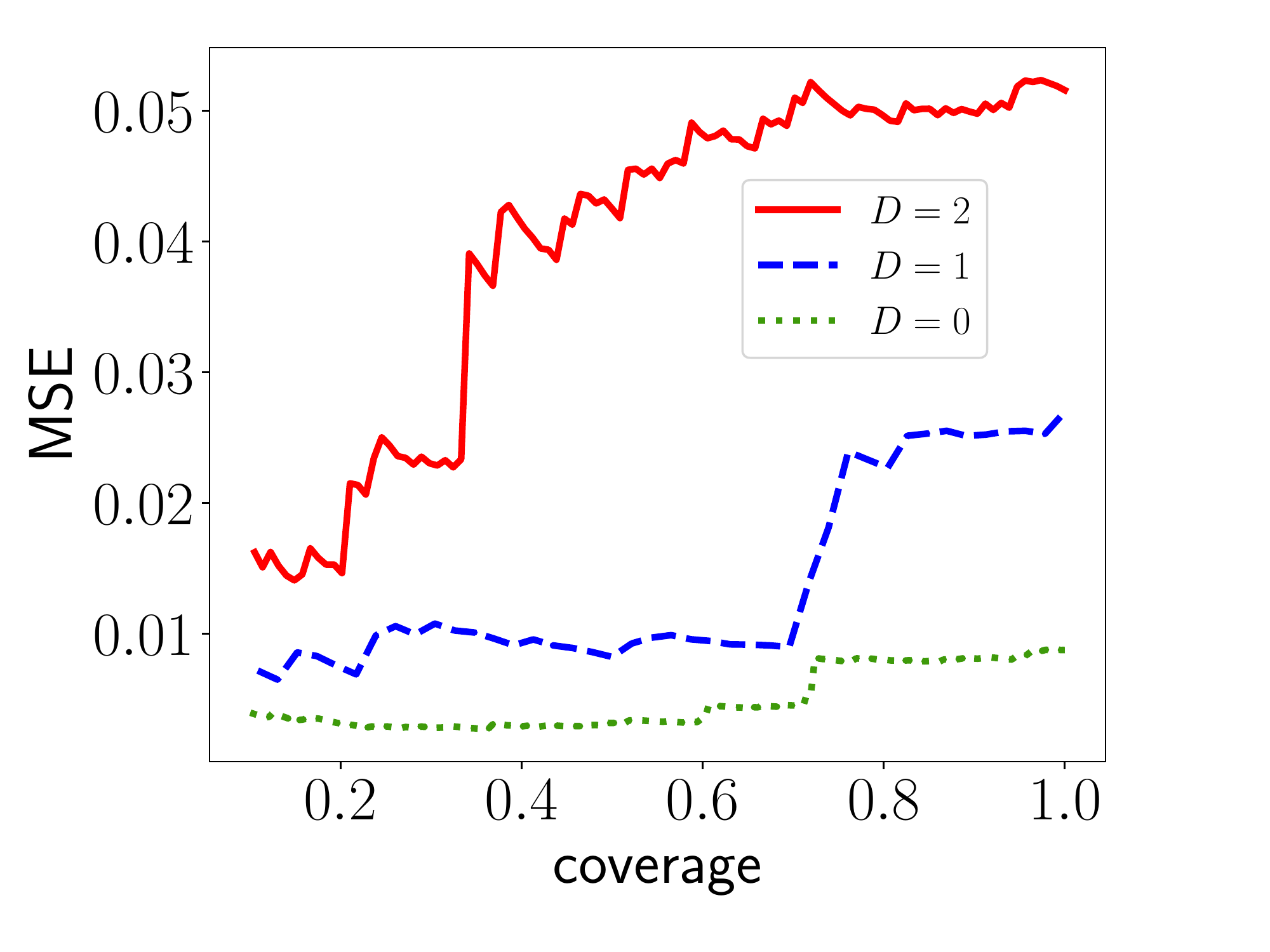}
 \caption{Performance of \texttt{Algorithm 1}.}
  \label{fig:crime_suff_3}
\end{subfigure}
\caption{Subgroup MSE vs. coverage plots for the Crime dataset with three subgroups}
\label{fig:crime_3}
\end{figure*}

\end{document}